\newtheorem{prop}{Proposition}
\newenvironment{manualtheorem}[1]{%
  \manualtheoreminner
}{\endmanualtheoreminner}
\newcommand{\shams}[1]{{\color{black}{#1}}}
\begin{document}

\runningauthor{S.S. Azam, T. Kim, S. Hosseinalipour, C. Joe-Wong, S. Bagchi, C. Brinton}

\twocolumn[

\aistatstitle{Can we Generalize and Distribute Private Representation Learning?}

\vspace{-4mm}
\aistatsauthor{ Sheikh Shams Azam \And Taejin Kim \And Seyyedali Hosseinalipour}
\aistatsaddress{ School of ECE\\Purdue Univeristy\\IN, USA \And Department of ECE\\Carnegie Mellon University\\CA, USA \And School of ECE\\Purdue Univeristy\\IN, USA}
\aistatsauthor{Carlee Joe-Wong \And Saurabh Bagchi \And Christopher Brinton}
\aistatsaddress{ Department of ECE\\Carnegie Mellon University\\CA, USA \And School of ECE\\Purdue Univeristy\\IN, USA \And School of ECE\\Purdue Univeristy\\IN, USA}
]


\begin{abstract}

We study the problem of learning representations that are private yet informative, i.e., provide information about intended ``ally'' targets while hiding sensitive ``adversary'' attributes. We propose Exclusion-Inclusion Generative Adversarial Network (EIGAN), a generalized private representation learning (PRL) architecture that accounts for multiple ally and adversary attributes \shams{unlike existing PRL solutions. While centrally-aggregated dataset is a prerequisite for most PRL techniques,} data in real-world is often siloed across multiple distributed nodes unwilling to share the raw data because of privacy concerns. We address this practical constraint by developing D-EIGAN, the first distributed PRL method that learns representations at each node without transmitting the source data. We theoretically analyze the behavior of adversaries under the optimal EIGAN and D-EIGAN encoders and the impact of dependencies among ally and adversary tasks on the optimization objective. Our experiments on various datasets demonstrate the advantages of EIGAN in terms of performance, robustness, and scalability. In particular, EIGAN outperforms the previous state-of-the-art by a significant accuracy margin (47\% improvement), and D-EIGAN's performance is \shams{consistently on par with EIGAN under different network settings.}
\end{abstract}


\section{INTRODUCTION}
\label{sec:intro}

Training machine learning (ML) models often requires sharing data among multiple parties, e.g., cloud services aggregating data from multiple users to learn a global model. Such data sharing naturally raises concerns \citep{chakraborty2013framework,saleheen2016msieve} about exposing sensitive user attributes in datasets. \shams{It is thus imperative that both data aggregators and users engage in/propose procedures that minimize leakage of sensitive information.}

A widely used technique for obfuscating sensitive attributes in data is \shams{context-agnostic noise injection (e.g. Laplace mechanism)} \citep{diff_priv}, that introduces additive noise into a dataset to provide membership security \citep{li2007t}.
However, noise injection can impact ML training and inference significantly \citep{tossou2017achieving}. This makes such context-agnostic techniques unsuitable in scenarios where only a few attributes 
need to be concealed. For example, upon sharing patient data for preventive healthcare \citep{henry2015targeted, azam2019cascadenet}, both privacy (e.g., gender anonymization) and predictivity (e.g., accurate diagnosis) are desirable.

These drawbacks of context-agnostic privacy measures motivate private representation learning (PRL) \citep{privacy_gan}, which exploits the knowledge of sensitive attributes in a dataset. PRL considers privacy and predictivity as joint (and possibly competing) objectives, and learns a transformation on the data that balances the goals of (i) obfuscating sensitive attributes of interest to an ``adversary'' (adv.) while (ii) preserving predictivity on intended targets for an ``ally'' \citep{ganin2016domain}.

Conventionally, the literature on PRL assumes the existence of a single sensitive attribute and a central dataset \citep{privacy_gan, advr_info_leakage, advr_rep_learn, bertran2019adversarially}. However, most real-world datasets have multiple sensitive attributes and are collected across multiple distributed nodes. 
Healthcare records, for example, are (i) spread across hospitals in different regions,
(ii) consist of potentially multiple sensitive attributes, such as mental health, gender, ethnicity, etc., and (iii) may have varying notions of privacy that vary from one region to another, e.g., 
while in Europe racial/ethnic origin are considered as sensitive information (as per GDPR), in USA they are not (as per HIPAA).
These challenges call for a {\em generalized and distributed} PRL methodology that takes into account multiple sensitive attributes, trains on data distributed across nodes, and learns representations that incorporate the privacy/predictivity goals of each node.
Communication-efficiency is also a key objective in distributed learning, particularly 
when it is being deployed 
in network settings where nodes are restricted to communicate over limited-bandwidth links~\citep{hosseinalipour2020multi,chatterjee2020context}, e.g., remote health analytics across user devices \citep{dimitrov2016medical}.

\begin{figure}[t!]
\begin{center}
\centerline{\includegraphics[width=1.0\columnwidth]{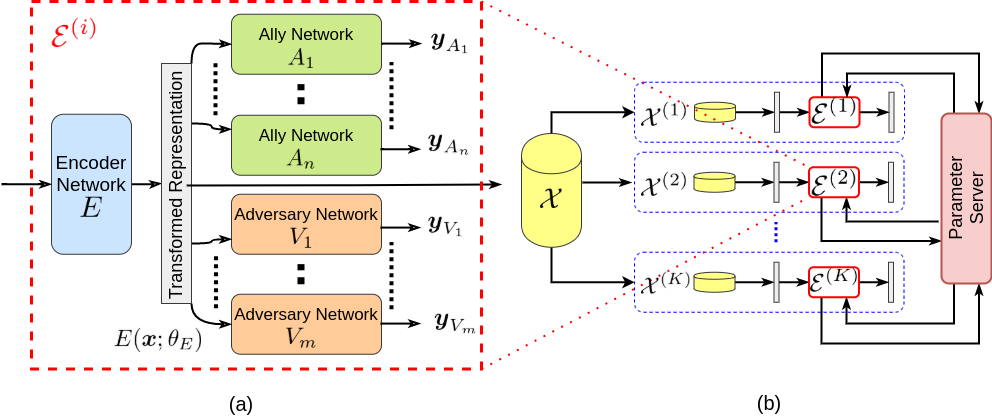}}
    \vskip -0.1in
\caption{\small{(a) Architecture of a single EIGAN node, consisting of an encoder, $n$ ally, and $m$ adversary networks. (b) D-EIGAN system for distributed EIGAN training, consisting of $K$ different EIGAN nodes, each with their own subset of the full dataset. The nodes must coordinate their local encodings via a parameter server.}}
\label{fig:dist-eigan}
\end{center}
\vskip -0.25in
\end{figure}

In this paper, we propose a novel PRL architecture
called {\bf {\em Exclusion-Inclusion Generative Adversarial Network (EIGAN)}}, which addresses the aforementioned challenges.
EIGAN is a generalized PRL technique designed to generate encodings ``inclusive'' of signals that are of utility to a set of allies, while ``exclusive'' of signals that can be used by adversaries to recover sensitive attributes.
Further, to address the privacy vulnerabilities of pooling raw data, we develop {\bf {\em D-EIGAN}} (for Distributed-EIGAN), 
where multiple EIGAN nodes 
train encoders on their local datasets
and synchronize their model parameters periodically, as depicted in Fig. \ref{fig:dist-eigan}. D-EIGAN implements distributed training without noticeable model degradation compared to the centralized EIGAN, while accounting for realistic factors of communication constraints and non-i.i.d data distributions across nodes.

\textbf{Related work.}
Recent works in PRL~\citep{privacy_gan,advr_info_leakage,advr_rep_learn,raval2017protecting,wu2018towards} have proposed centralized architectures that jointly maximize the loss in predicting sensitive attributes 
while minimizing the loss of target task prediction. Specifically,~\citet{privacy_gan} proposed a three-network encoder-ally-adversary architecture and showed that the achievable tradeoff between the two objectives is better than that provided by DP. In~\citet{advr_info_leakage}, the problem was formulated as a non-zero-sum game between the three networks to minimize information leakage in encoded image representations. \citet{advr_rep_learn} experimentally outperform  \citet{advr_info_leakage,louizos2015variational,xie2017controllable,pmlr-v28-zemel13} using a minimax optimization among three networks, and derive its closed-form solution when the networks are linear maps. We demonstrate that EIGAN converges to the optimal performance obtained by these closed form solutions. \shams{However, unlike the closed form solution in \citet{advr_rep_learn}, EIGAN can be extended to account for multiple ally/adv. attributes.} Furthermore, EIGAN has computational advantage over \citet{advr_rep_learn} as it does not depend on matrix inversions\shams{, and thus can work with higher dimensional data}.

\begin{table}[t]
    \centering
    \resizebox{\columnwidth}{!}{\begin{tabular}{| c | c  c | c  c |}
      \hline
      & \multicolumn{2}{c|}{Adult Dataset}& \multicolumn{2}{c|}{Facescrub Dataset}\\
      \hline
      \multirow{2}{*}{\textbf{Objective}} & \textbf{Ally} & \textbf{Adversary} & \textbf{Ally} & \textbf{Adversary} \\
      & (identity) & (gender) & (income) & (gender) \\
      \hline\hline
      Unencoded & 0.85 & 0.85 & 0.98 & 0.99 \\
      \hline
      Linear-ARL & \textbf{0.84} & 0.67 & - & - \\
      Kernel-ARL & \textbf{0.84} & 0.67 & - & - \\
      Bertran-PRL & 0.82 & 0.67 & \textbf{0.56} & 0.68\\
      EIGAN & \textbf{0.84} & 0.67 & \textbf{0.82} & 0.68 \\ 
      \hline
      \multirow{2}{*}{\% Improv.} & Matches closed & Controlled & \multirow{2}{*}{47.01\%} & Controlled \\
      & form solution & to be equal & & to be equal \\
      \hline
    \end{tabular}}
    \vspace{0.01in}
    \caption{\small{Performance comparison between EIGAN, \citet{advr_rep_learn} (Linear-ARL, Kernel-ARL), and \citet{bertran2019adversarially} (Bertran-PRL) on the Adult \& FaceScrub datasets considered in those works. 
    For the same adv. performance, EIGAN obtains a notable improvement over \citet{bertran2019adversarially} (ally improvement of $47.01\%$). It also reaches the optimal closed form solution of \citet{advr_rep_learn}.
    }}
    \label{table:bertran_compare}
  \vspace{-0.05in}
\end{table}

Other PRL works take an information-theoretic approach. \citet{bertran2019adversarially} view PRL as minimization of the utility lost in the learned representation, subject to an upper bound on mutual information between the output representation and the sensitive attribute. Similarly, \citet{ppan} formulate the minimax problem in terms of KL-divergence. EIGAN, on the other hand, considers a cross-entropy PRL formulation, which promotes interpretability and training stability over multiple objectives (discussed in Section~\ref{ssec:c-arch}).
Furthermore, our experiments show that EIGAN significantly outperforms the state-of-the-art \citep{bertran2019adversarially} in the single ally/adversary case. {\em Distinct from all prior work in PRL, we consider multiple sensitive attributes and distributed learning.} 

There are two other related directions in adversarial learning. One addresses privacy-preservation through synthetic data generation \citep{pategan, dpgan}, which differs from EIGAN's goal of learning a transformation. The other is fair representation learning \citep{oh2016faceless, edwards2015censoring, kusner2017counterfactual}, which seeks to learn intrinsically fair representations that promote demographic parity on a single attribute \citep{madras2018learning}.

\textbf{Contributions.}
Our main contributions are:
\begin{enumerate}[leftmargin=4mm]
\item We introduce EIGAN 
(Section~\ref{ssec:c-arch}), generalizing PRL to 
account for multiple target and sensitive attributes.
We prove that EIGAN's encoder utility is maximized if the adversary outputs follow a uniform distribution, and consider the effect of correlations between ally and adversary objectives (Prop. \ref{prop:uniform}).
\item To the best of our knowledge, D-EIGAN (Section~\ref{ssec:d-arch}) is the first technique for distributed training of PRL models. We show that when the nodes engaged in the training possess \shams{independent and identically distributed} (i.i.d) datasets, the objective of D-EIGAN exhibits similar properties to EIGAN (Prop.~\ref{prop:distributed_iid}).
\item Our experiments (Section~\ref{sec:results}) reveal that EIGAN significantly outperforms the state-of-the-art in PRL (Table~\ref{table:bertran_compare}, Fig.~\ref{fig:gaussian_bert_ally}) and is robust to the choice of adversary architectures (Table~\ref{table:robust_compare}). 
We also demonstrate that D-EIGAN matches the performance of EIGAN even as the number of nodes increases (Fig.~\ref{fig:distributed_num_nodes}), and is robust even when nodes have different objectives (Fig.~\ref{fig:distributed_eigan_mimic}). We further show the resilience of D-EIGAN to non-i.i.d data distributions across nodes, and under communication restrictions that require partial parameter sharing and delayed model aggregations in the system (Fig.~\ref{fig:distributed_control_params}).
\end{enumerate}


\section{EIGAN MODEL}
\label{sec:model}

\textbf{Overview.} Our PRL methodology consists of two phases: training and testing. In the training phase, EIGAN -- knowing the sensitive/target labels of interest to adversary/ally on the train dataset -- aims to learn the encoder by simulating allies and adversaries. Each of the allies, adversaries, and encoder independently maximize their own utilities by updating their local model parameters. The selfish maximization by each player naturally leads to the minimax optimization in \eqref{eqn:minimax_game2}. In the testing phase, the test data undergoes a transformation through the trained encoder. The transformed data is used for conventional training and inference by the actual allies and adversaries on their respective tasks of interest.

In Section~\ref{ssec:c-arch}, we present the EIGAN formulation for centralized model training, and derive properties of the solution. Then, we extend it to the distributed learning case, D-EIGAN, in Section~\ref{ssec:d-arch}.  
Refer to Appendix.~\ref{supp:proofs} for the proofs of the propositions.

\subsection{EIGAN: Centralized Model Architecture}
\label{ssec:c-arch}

We first consider a system consisting of $n$ allies, indexed $A_1,...,A_n$; and $m$ adversaries, indexed $V_1,...,V_m$. Ally $A_i$ is characterized by model parameters $\theta_{A_{i}}$  and a set of target attributes/labels $Y_{A_{i}}$ drawn from  distribution $\mathcal{Y}_{_{A_{i}}}$. $A_i$ aims to associate each input sample with its corresponding target attribute in $Y_{A_{i}}$. Similarly, adversary $V_j$ parameterized by $\theta_{V_{j}}$ wishes to associate input samples with a set of (known) sensitive attributes/labels $Y_{V_{j}}$ following distribution $\mathcal{Y}_{V_j}$. 

The goal of EIGAN is to learn an encoder $E$ parameterized by $\theta_E$ that maximizes the performance of $A_1,...,A_n$ while minimizing the performance of $V_1,...,V_m$. The encoder uses a centrally-located dataset $\mathcal{X}$ 
consisting of $N$ samples, where each sample 
is represented  as a $d$-dimensional feature vector $\boldsymbol{x}_j \in \mathbb{R}^d$, $j = 1,...,N$.
We let $E(\boldsymbol{x};\theta_E)$ denote the output of the encoder for a data sample $\boldsymbol{x}$ realized via the parameters $\theta_E$. 
$E(\boldsymbol{x};\theta_E): \mathbb{R}^d \rightarrow \mathbb{R}^l$ is in general a non-linear differentiable function (e.g., a neural network), 
where $l$ is the dimension of the representation output by the encoder, and typically $l \leq d$.

For $\boldsymbol{x} \hspace{-1mm}\in\hspace{-1mm} \mathcal{X}$, the encoded representation $E(\boldsymbol{x};\theta_E\hspace{-0.5mm})$ is what the allies $A_1,..,A_n$ and adversaries $V_1,..,V_m$ are provided with for their tasks, as depicted in Fig.~\ref{fig:dist-eigan}(a). We quantify the utilities of the allies and adversaries as: 
\vspace{-3mm}

\begin{align}\label{eq:lossFunctions}
\begin{split}
  &\hspace{-0.5mm}u_{_{A_i}} \hspace{-0.8mm}= \mathbb{E}_{{Y}\sim \mathcal{Y}_{_{A_{i}}}} \hspace{-0.8mm}\left[ \log\left(p_{_{A_i}}\hspace{-0.8mm}\left({Y}|E(\mathcal{X}; \theta_E)\right)\right) \right]\hspace{-0.5mm},\hspace{-0.5mm}1\hspace{-0.5mm}\leq\hspace{-0.5mm} i\hspace{-0.5mm}\leq\hspace{-0.5mm} n, \\
   &\hspace{-0.5mm}u_{_{V_j}} \hspace{-0.8mm}= \mathbb{E}_{Y\sim \mathcal{Y}_{_{{V}_j}}} \hspace{-0.8mm}\left[ \log\left(p_{_{V_j}}\hspace{-0.8mm}\left({Y}|E(\mathcal{X}; \theta_E)\right)\right) \right]\hspace{-0.5mm},\hspace{-0.5mm}1\hspace{-0.5mm}\leq \hspace{-0.5mm}j\hspace{-0.5mm}\leq \hspace{-0.5mm}m,  
\end{split}
\end{align}

\noindent where $p_{_{A_i}}\left({Y}|E(\mathcal{X}; \theta_E)\right)$ and $p_{_{V_j}}\left({Y}|E(\mathcal{X}; \theta_E)\right)$ denote the probabilities of successful inference of target labels $Y \sim \mathcal{Y}_{A_i}$ and sensitive labels $Y \sim \mathcal{Y}_{V_j}$ for ally $A_i$ and adversary $V_j$, respectively, over the outputs that the encoder $E$ provides for the dataset $\mathcal{X}$. This leads to our minimax game among three types of players, in which two (the encoder and allies) are colluding against the third (the adversary). Specifically, we formulate the optimization problem:
\begin{equation}
\min_{\theta_{V}=\{\theta_{V_{j}}\}_{j=1}^{m}}~~~ \max_{\theta_{E},\theta_{A}=\{\theta_{A_{i}}\}_{i=1}^{n}} U(\theta_{E}, \theta_{A}, \theta_{V}),
\label{eqn:minimax_game2}
\end{equation}
where
\begin{equation}
U(\theta_{E},\theta_{A}, \theta_{V})=\sum_{i=1}^{n} \alpha_{_{A_i}} u_{_{A_i}} - \sum_{j=1}^{m}  \alpha_{_{V_j}} u_{_{V_j}}.
\label{eqn:minimax_game3}
\end{equation}
Here, $\smash{\alpha_{_{A_i}}, \alpha_{_{V_j}} > 0}$ denote normalized importance parameters placed on each objective such that $\smash{\sum_{i=1}^{n} \alpha_{A_i} + \sum_{j=1}^{m} \alpha_{V_j} = 1}$. Similar to the encoder, we assume that the ally and adversary are non-linear, differentiable functions. The encoder in \eqref{eqn:minimax_game2} seeks to maximize the achievable utility of the allies while minimizing those of the adversaries, operating in conjunction with the allies in the inner max layer of \eqref{eqn:minimax_game2}. The adversaries then operate on the encoder result in the outer min layer, where each adversary $V_j$ aims to maximize its utility $u_{_{V_j}}$ by updating $\theta_{_{V_j}}$, as it cannot access other ally/adversary's parameters directly.

It is worth noting that, similar to the formulation based on mutual information in \cite{bertran2019adversarially}, our analysis on the expected posterior distribution of the predictions in EIGAN map directly to interpretable metrics such as accuracy \citep{bassily2018learners} and generalization error \citep{feder1994entropy}, instead of the worst case guarantees provided by context-agnostic privacy frameworks such as DP.

Intuitively, the encoder will attempt to diminish the adversary predictions to a random guess, i.e., to a uniform distribution over its target labels \citep{adv_crypto}. However, this may be difficult to achieve when the interests of the allies and adversaries are related, which makes the weights $\alpha_{_{A_i}}, \alpha_{_{V_j}}$ important to the minimax solution in \eqref{eqn:minimax_game2} formalized in the proposition below:

\begin{prop}\label{prop:uniform}
Let $\mathcal{O}$ denote the set of all $(i,j)$ pairs of allies $A_i$ and adversaries $V_j$ 
for which ${Y}_{A_i} \cap {Y}_{V_j} \neq \emptyset$, i.e., overlapping interests. Given a fixed encoder $E$ in EIGAN architecture, if $\mathcal{O}=\emptyset$,  the overall score in \eqref{eqn:minimax_game2} is maximized when the adversaries' output predictions follow a uniform distribution. On the other hand, if  $\mathcal{O} \neq \emptyset$, then for each overlapping label, the architecture proposed by \eqref{eqn:minimax_game2} considers the utility of the attributes that have the higher importance weight, i.e., $A_i$ if $\alpha_{A_i} > \alpha_{V_j}$ and $V_j$ if $\alpha_{A_i} < \alpha_{V_j}$. 
\end{prop}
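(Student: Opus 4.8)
The plan is to fix the encoder $E$ and analyze the induced optimization in two stages: first characterize the adversaries' best responses, then study how the encoder's maximization interacts with the ally objectives, treating the disjoint case ($\mathcal{O}=\emptyset$) and the overlapping case ($\mathcal{O}\neq\emptyset$) separately. Writing $Z = E(\mathcal{X};\theta_E)$, the first step is to determine each adversary's optimal prediction. Adversary $V_j$ chooses its output distribution $p_{V_j}(\cdot\mid Z)$ to maximize $u_{V_j} = \mathbb{E}_{Y\sim\mathcal{Y}_{V_j}}[\log p_{V_j}(Y\mid Z)]$, which is a log-likelihood (negative cross-entropy) objective; by Gibbs' inequality, equivalently the nonnegativity of the KL divergence, the maximizer is the true encoder-induced posterior $p_{V_j}^\star(\cdot\mid Z)=p(Y_{V_j}\mid Z)$, with optimal value $u_{V_j}^\star=-H(Y_{V_j}\mid Z)$. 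This step is routine and parallels the optimal-discriminator argument in the standard GAN analysis.

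Substituting the best responses back into $U$ turns the adversary term into $+\alpha_{V_j}H(Y_{V_j}\mid Z)$, since it enters with a negative sign. Hence the encoder and allies effectively maximize $\sum_i \alpha_{A_i} u_{A_i} + \sum_j \alpha_{V_j} H(Y_{V_j}\mid Z)$, so the encoder is rewarded both for preserving information that lets $A_i$ predict $Y_{A_i}$ and for maximizing the residual uncertainty $H(Y_{V_j}\mid Z)$ about each sensitive label. The key observation is that $H(Y_{V_j}\mid Z)$ is maximized exactly when $Z$ carries no information about $Y_{V_j}$, i.e. $p(Y_{V_j}\mid Z)=p(Y_{V_j})$; under the balanced-label (random-guess) convention adopted in the statement this marginal is uniform, so the adversary's best response collapses to the uniform distribution.

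For $\mathcal{O}=\emptyset$ I would argue that, since the ally and adversary label sets share no attribute, maximizing each $u_{A_i}$ and maximizing each $H(Y_{V_j}\mid Z)$ are non-conflicting: under sufficient encoder capacity there exists $\theta_E$ retaining the ally-relevant signal while discarding all adversary-relevant signal, driving every $H(Y_{V_j}\mid Z)$ to its maximum simultaneously. At the joint maximum each adversary's best-response output is therefore uniform, which proves the first claim.

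The delicate case is $\mathcal{O}\neq\emptyset$, which I would handle by a marginal-contribution (perturbation) argument along each shared label. For an overlapping label in $Y_{A_i}\cap Y_{V_j}$, any encoder change that increases the information $Z$ carries about that label raises $u_{A_i}$ at rate $\alpha_{A_i}$ while lowering $H(Y_{V_j}\mid Z)$, reducing the objective at rate $\alpha_{V_j}$; the net marginal effect on $U$ has the sign of $\alpha_{A_i}-\alpha_{V_j}$. Thus the encoder preserves the shared label (favoring $A_i$) when $\alpha_{A_i}>\alpha_{V_j}$ and suppresses it toward uniformity when $\alpha_{A_i}<\alpha_{V_j}$, giving the stated dichotomy. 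I expect this to be the main obstacle, because the clean per-label decomposition is exact only when the information contributions of distinct labels are separable; to make it rigorous I would either assume the label-conditional contributions decompose additively over coordinates of the representation, so each shared label can be tuned independently, or state the conclusion as the sign of the marginal gain at a stationary encoder, and I would explicitly flag the balanced-label assumption needed to identify ``no information'' with ``uniform output.''
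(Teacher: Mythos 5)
Your proof is correct and, for the first claim, follows a genuinely tighter route than the paper's. The paper rewrites the utilities as cross-entropies and then asserts in \eqref{eqn:minimax_game_entropy_last} that each adversary term $H_{V_j}=\mathbb{H}(Y\sim\mathcal{Y}_{V_j},\hat{Y}_{V_j})$ is maximized at the uniform prediction $\hat{Y}_{V_j}=1/|Y_{V_j}|$; read literally this step is loose, since cross-entropy is unbounded above in the prediction variable (push the predicted mass on the true labels toward zero). The claim only holds once the adversary is required to play rationally, and that is precisely what your two-stage argument supplies: Gibbs' inequality forces the best-responding adversary onto the encoder-induced posterior with value $-H(Y_{V_j}\mid Z)$, and then $H(Y_{V_j}\mid Z)\le H(Y_{V_j})$ with equality iff $Z$ is uninformative about $Y_{V_j}$, which yields uniform outputs under the balanced-label convention you explicitly flag (the paper assumes it silently by writing $1/|Y_{V_j}|$). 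So your version buys rigor where the paper's buys brevity. For the overlapping case the two arguments are essentially the same: the paper's algebraic per-label split in \eqref{eqn:pos_corr2} produces the coefficient $(\alpha_{A_i}-\alpha_{V_j})$ on shared labels exactly because it assumes $u_{A_i}(c)=u_{V_j}(c)$ there and additivity across labels, which is the same separability hypothesis your perturbation argument needs and which you correctly identify as the weak point. The one caveat both treatments share is that disjoint label sets are used as a stand-in for statistical independence of the ally and adversary targets; your ``non-conflicting objectives under sufficient capacity'' step, like the paper's ``can be optimized separately,'' really requires the latter.
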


Prop. \ref{prop:uniform} shows that given an encoded representation, if the allies and adversaries possess non-overlapping interests, then a uniform prediction distribution among the sensitive parameters of interest to the adversaries is adopted by the optimal solution. In Appendix.~\ref{suppsub:eigan-mnist}, we consider an experiment with such overlapping interests and equal importance weights, and find that EIGAN is unable to balance the objectives.

In practice, coincidental overlaps between ally and adversary interests would be relatively rare, but could nonetheless occur.
In such cases, EIGAN must balance predictivity and privacy, which leads to different ally and adversary outputs described in Prop.~\ref{prop:uniform}. We further analyze EIGAN's characteristics when there is a linear relationship between the target distribution of an ally and an adversary (see Prop.~\ref{prop:nonuniform} in Appendix.~\ref{suppsub:proof-2}).

\textbf{Model training.} We train the encoder and the allies/adversaries in EIGAN by alternately updating their parameters using stochastic gradient descent (SGD) to minimize their cross-entropy (CE) loss. For the encoder, we define the CE-loss $\mathcal{L}_E$ for a single training instance as a weighted combination of the predictive capability of the allies and adversaries as
\vspace{-3mm}

{\small
\begin{equation}
    {\mathcal{L}}_E = \sum_{i=1}^{n}\underbrace{- \langle\boldsymbol{y}_{A_i}, \log \hat{\boldsymbol{y}}_{A_i}\rangle}_{\substack{\text{loss of ally } A_i \text{, } {\mathcal{L}}_{A_i}}} - \alpha \cdot \sum_{j=1}^{m} \underbrace{\hspace{-5mm}-\langle\boldsymbol{y}_{V_j}, \log \hat{\boldsymbol{y}}_{V_j}\rangle}_{\substack{\text{loss of adversary } V_j \text{, } {\mathcal{L}}_{V_j}}}\hspace{-5mm},
    \label{eqn:log_loss}
\end{equation}}

\noindent where $\langle.,.\rangle$ denotes inner product, and $\log$ is applied element-wise. $\boldsymbol{y}_{A_i}$ and $\boldsymbol{y}_{V_j}$ are the binary vector representations of the true class labels for ally $A_i$ and adversary $V_j$, respectively, while $\hat{\boldsymbol{y}}_{A_i}$ and $\hat{\boldsymbol{y}}_{V_i}$ are the vectors of soft predictions (i.e., probabilities) for each class. Here, we have made the simplifications $\alpha_{_{A_i}} = \alpha / n \; \forall i$ and $\alpha_{_{V_{j}}} = (1 - \alpha) / m \; \forall j$, where $\alpha \in (0,1)$ is tuned to emphasize either predictivity (higher $\alpha$) or privacy (lower $\alpha$). It can be seen that the minimization of loss $\mathcal{L}_E$ is equivalent to the maximization of utility defined by \eqref{eqn:minimax_game3}. In each epoch, we average $\mathcal{L}_E$ over a minibatch of size $J$ to obtain an estimate of \eqref{eq:lossFunctions}, and update $\theta_E$ based on the gradient. Then, we update the $\theta_{A_i}$ and $\theta_{V_i}$ according to~\eqref{eqn:log_loss}. See Alg.~\ref{alg:eigan} in Appendix.~\ref{supp:algorithms} for details.

\textbf{Loss consideration.} Alternative objectives to \eqref{eqn:log_loss} exist in PRL literature. In particular, recent works \citep{gap,ppan, bertran2019adversarially} formulate the adversarial loss using KL divergence.
We choose CE-loss over KL-divergence based on the fact that KL divergence fails to give meaningful value under disjoint distributions \citep{wgan}. \shams{Also, our CE-loss formulation is unconstrained as opposed to KL-divergence formulation which is a Lagrangian dual of the constrained formulation \citep{bertran2019adversarially}. As analyzed in Prop.~\ref{prop:uniform}, our formulation naturally pushes adversary prediction towards uniform distribution, however, the same does not hold for the constrained formulation.} Our results in Table~\ref{table:bertran_compare} and Fig.~\ref{fig:gaussian_bert_ally} validate our formulation choice. We show consistent improvements over the state-of-the-art \citep{bertran2019adversarially} that uses KL divergence.

\begin{figure}[t!]
\begin{center}
    \centerline{\includegraphics[width=1.0\columnwidth]{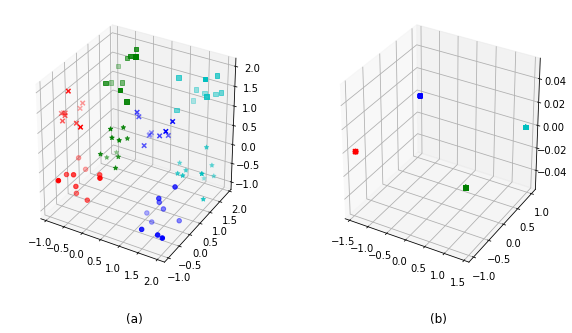}}
    \vskip -0.1in
    \captionof{figure}{\small{(a) Synthetic dataset with eight groups of points, two allies, and one adversary. The allies are interested in separating the color pairs (the two horizontal axes), and the adversary is interested in classifying shapes (the vertical axis). (b) EIGAN's encoding has collapsed the adversary dimension while preserving the allies.}}
    \label{fig:octant_advr}
\end{center}
\vskip -0.25in
\end{figure}

\textbf{Visual demonstration.} Fig.~\ref{fig:octant_advr} is a visual demonstration of EIGAN's trained representation on a synthetic dataset. There are two allies $A_1$ and $A_2$ which are each interested in separating data points along one of the horizontal axes, and an adversary $V$ that is interested in separation along the vertical axis. We see in (b) that the EIGAN encoding collapses the data along the vertical axis while retaining separability in the other two dimensions. Other illustrations are given in Appendix.~\ref{supp:poc-vis}.

\subsection{D-EIGAN: Distributed EIGAN Model}
\label{ssec:d-arch}

The distributed setting for EIGAN (D-EIGAN) is depicted in Fig.~\ref{fig:dist-eigan}(b). There are $K$ nodes in the system, denoted $\mathcal{E}^{(1)},...,\mathcal{E}^{(K)}$, and a parameter server for model synchronization. Each node $\mathcal{E}^{(k)}$ has a set of allies, denoted $A^{(k)}_{1},...,A^{(k)}_{n(k)}$ with target label sets $Y_{A^{(k)}} = \{Y_{A^{(k)}_{1}},..., Y_{A^{(k)}_{n(k)}} \}$, a set of adversaries, denoted $V^{(k)}_{1},...,V^{(k)}_{m(k)}$ with target sets $Y_{V^{(k)}} = \{Y_{V^{(k)}_{1}},...,Y_{V^{(k)}_{m(k)}}\}$, and a subset $\mathcal{X}_k \subset \mathcal{X}$ of $N_k$ datapoints from the overall dataset $\mathcal{X}$ of $N$ samples. These local datasets are in general non-overlapping, and may differ in size. While the specific allies and adversaries may differ at each node, the goal is to train encoder models that maximize all allies' and minimizes all adversaries' performances, so that the encodings are meaningful throughout the system. Since sharing the raw datasets could potentially leak sensitive information, each node $\mathcal{E}^{(k)}$ will train its own local encoder $E^{(k)}(\boldsymbol{x}; \theta_{E^{(k)}})$, and the server in Fig.~\ref{fig:dist-eigan}(b) will periodically aggregate the locally-trained models.

The utility function for node $\mathcal{E}^{(k)}$ is defined as
\vspace{-3mm}

{\footnotesize\begin{equation}\label{eq:UtilityDist}
\hspace{-1mm}U^{(k)}\hspace{-0.5mm}(\theta_{E^{(k)}}\hspace{-0.5mm},\hspace{-0.5mm}\theta_{A^{(k)}}\hspace{-0.5mm},\hspace{-0.5mm}\theta_{V^{(k)}}\hspace{-0.5mm})\hspace{-0.5mm}=\hspace{-0.5mm}\sum_{i=1}^{n} \hspace{-0.5mm} \alpha_{{A^{(k)}_i}} u_{A^{(k)}_i}\hspace{-0.5mm}- \hspace{-0.5mm}\sum_{j=1}^{m} \hspace{-0.5mm} \alpha_{V^{(k)}_j} u_{V^{(k)}_j},
\end{equation}}

\vspace{-1mm}
\noindent where $\theta_{A^{(k)}}=\big\{\theta_{A^{(k)}_{i}}\big\}_{i=1}^{n(k)}$ and $\theta_{V^{(k)}}=\big\{\theta_{V^{(k)}_{j}}\big\}_{j=1}^{m(k)}$ denote the sets of ally and adversary parameters at node $\mathcal{E}^{(k)}$, and $u_{A^{(k)}_i}, u_{V^{(k)}_j}$ denote the utility functions of $A^{(k)}_i, V^{(k)}_j$ defined analogously to~\eqref{eq:lossFunctions}. $\alpha_{A^{(k)}_i}, \alpha_{V^{(k)}_j} > 0$ denote the normalized importance parameters for node $\mathcal{E}^{(k)}$, where $\sum_{i=1}^{n} \alpha_{A^{(k)}_i} + \sum_{j=1}^{m} \alpha_{V^{(k)}_j} = 1$. This leads to the following minimax game for the distributed case:
\vspace{-7mm}

{
\vspace{-2mm}
\begin{equation}\label{eqn:minimax_game4}
\begin{aligned}
        \min_{\mathcal{S}_V}~ \max_{\mathcal{S}_E,\mathcal{S}_A} ~ & \frac{1}{K} \sum_{k=1}^{K} U^{(k)}(\theta_{E^{(k)}},\theta_{A^{(k)}},\theta_{V^{(k)}}) \\ \textrm{s.t.}
   ~~~~~~ & \theta_{E^{(k)}}=\theta_{E^{(k')}},~k \neq k', 1\leq k,k'\leq K,
\end{aligned}
\end{equation}
}

\noindent where $\smash{\footnotesize\mathcal{S}_V = \left\{\theta_{V^{(k)}}\right\}_{k=1}^{K} ,\mathcal{S}_E = \left\{\theta_{E^{(k)}}\right\}_{k=1}^{K}}$, and $\smash{\mathcal{S}_A=\left\{\theta_{A^{(k)}}\right\}_{k=1}^{K}}$.
The constraint in~\eqref{eqn:minimax_game4} ensures that the optimal encoder is the same across all nodes, even though each node may have different allies and adversaries. In this way, an encoded datapoint $E^{(k)}(\boldsymbol{x}; \theta_E)$ at node $k$ could be transferred to another node $k'$ and applied to a task $A^{(k')}_{i}$ privately, e.g., for anonymized user data sharing during single sign-ons.

\textbf{Distributed model training.} While solving \eqref{eqn:minimax_game4} in a distributed manner, D-EIGAN learns both a global model and personalized local models (allies and adversaries) \citep{smith2017federated}, unlike standard Federated Learning (FL).

Our algorithm consists of two iterative steps. The first is \textit{local update}: each $\mathcal{E}^{(k)}$ conducts a series of $\delta$ SGD iterations. For each minibatch in SGD, training proceeds as in the centralized case, with the encoder, allies', and adversaries' parameters updated via SGD to minimize the CE-losses $\mathcal{L}_E^{(k)}$, $\mathcal{L}_{A_i}^{(k)}$, and $\mathcal{L}_{V_j}^{(k)}$ defined as in \eqref{eqn:log_loss} but in this case for each node. The second step is \textit{global aggregation}, in which each $\mathcal{E}^{(k)}$ uploads its locally-trained encoder to the parameter server to construct a global version, after every $\delta$ SGD iterations. We introduce a sparsification technique here in which each node selects a fraction $\phi$ of its parameters at random to upload for each aggregation. Letting $\mathcal{Q}_k$ be the indices chosen by $\mathcal{E}^{(k)}$, then the vector recovered at the server is $\tilde{\theta}_{E^{(k)}}$, where $\tilde{\theta}_{E^{(k)}}(q) = {\theta}_{E^{(k)}}(q)$ if $q \in \mathcal{Q}_k$ and $0$ otherwise. With this, the global aggregation becomes the weighted average $\theta_E = \sum_k \frac{N_k}{N} \tilde{\theta}_{E^{(k)}}$. Then, the server also selects a fraction $\phi$ of indices at random to synchronize each node $k$ with on the downlink. Letting $\mathcal{Q}$ be these indices, each node $k$ sets $\theta_{E^{(k)}}(q) = \theta_E(q)$ if $q \in \mathcal{Q}$, and makes no change to the $q$th parameter otherwise. The pseudo-code of the training procedure is given in Alg.~\ref{alg:d-eigan}.

The synchronization frequency $\delta$ and sparsification factor $\phi$ are directly related to the amount of data transferred through the system: as $\delta$ increases, uplink transfers to the server occur less frequently; as $\phi$ decreases, each uplink/downlink transmission requires fewer communication resources. This is an important consideration in networking applications where the nodes communicate over a resource-constrained channel \citep{wang2019adaptive, pmlr-v54-mcmahan17a}. Fractional parameter sharing, similar to pruning (both choose a subset of parameters), mimics the additive-noise DP mechanism \citep{huang2020privacy} on model weights, reducing associated leakage \citep{fredrikson2015model, shokri2017membership} to any untrusted entity with access to the system. We study the effect of $\delta$ and $\phi$ on D-EIGAN performance in Section~\ref{ssec:d-eigan-expt}.

\begin{algorithm}[t]
{\footnotesize 
   \caption{D-EIGAN training}
\label{alg:d-eigan}
\begin{algorithmic}[1]
  \Statex \textbf{\underline{Notation:}}
    \State $\theta_E$: global parameter vector
    \State $\mathcal{Q}_k$: uniformly random choice of indices at node $\mathcal{E}^{(k)}$
    \State $\tilde{\theta}_{E^{(k)}}$: parameter vector recovered at the server for encoder $E^{(k)}$, with its $q$th element denoted $\tilde{\theta}_{E^{(k)}}(q)$
    \State $\phi$: fraction of parameters shared
    \State $\delta$: number of epochs between aggregations
    \State ${(\cdot)_j}$: value for the $j$th minibatch
    \State $\mathcal{L}_{A^{(k)}_{i}}$ and $\mathcal{L}_{{V^{(k)}_{i}}}$: loss of ally $\small{A^{(k)}_{i}}$ and adversary $\small{V^{(k)}_{i}}$
    \State $\eta_{_E},\eta_{_A}$, and $\eta_{_V}$: learning rates 
   \Statex \textbf{\underline{Aggregation at Parameter Server:}}
   \State Initialize parameter $\theta_E$
   \For{each update round}
        \State Update parameter vector: $\theta_{E} \leftarrow \sum_{k=1}^K \frac{N_k}{N} \tilde{\theta}_{E^{(k)}}$
   \EndFor
   \Statex \textbf{\underline{Local Training at Node $\mathcal{E}^{(k)}$:}}
  \State Initialize $\big\{\theta_{A^{(k)}_{i}}\big\}_{i=1}^{n^{(k)}}$ and $\big\{\theta_{V^{(k)}_{j}}\big\}_{j=1}^{m^{(k)}}$
  \State Download initial $\theta_E$ from parameter server
   \For{number of training epochs}
        \State After $\delta$ epochs, update $\phi \cdot |\theta_{E^{(k)}}|$ chosen parameters 
        
        \Statex~~~~ from parameter server: $\theta_{E^{(k)}}(q)={\theta}_{E}(q)$ if $q\in \mathcal{Q}$
        
        \State Sample a minibatch $J$ from local dataset $\mathcal{X}_{k}$
        \State Update encoder: $\theta_{E^{(k)}} \leftarrow \theta_{E^{(k)}} - {\eta}_{_E} \cdot \nabla_{\theta_{E^{(k)}}} {\mathcal{L}}_{E^{(k)}}$
        \State Update ally/adversary parameters:
        
        \Statex~~~~~~~~ $\theta_{A^{(k)}_{i}}\leftarrow \theta_{A^{(k)}_{i}} - \eta_{_A} \cdot \nabla_{\theta_{A^{(k)}_{i}}} {\mathcal{L}}_{A^{(k)}_{i}}$, 
        \Statex~~~~~~~~ $\theta_{V^{(k)}_{i}} \leftarrow \theta_{V^{(k)}_{i}} - \eta_{_V} \cdot \nabla_{\theta_{V^{(k)}_{i}}} {\mathcal{L}}_{V^{(k)}_{i}}$
        \State After $\delta$ epochs, upload $\phi |\theta_{E^{(k)}}|$ encoder parameters:
        \Statex~~~~ $\tilde{\theta}_{E^{(k)}}(q)={\theta}_{E^{(k)}}(q)$ if $q\in \mathcal{Q}_k$, else $\tilde{\theta}_{E^{(k)}}(q)=0$
   \EndFor
\end{algorithmic}
}
\end{algorithm}

In D-EIGAN, the allies and adversaries may differ at each node, and each node trains an individual local encoder. Since the encoder parameters are globally synchronized, however, the local 
encoder implicitly trains using global union of allies/adversaries across nodes. In the case that the nodes have same objectives and i.i.d. datasets, we show that D-EIGAN yields the same properties as Prop.~\ref{prop:uniform}:

\begin{manualtheorem}{3}\label{prop:distributed_iid}
Given a set of fixed encoders in the D-EIGAN architecture, if all the nodes have the same number of allies and adversaries with the same sets of target labels ${Y}_{A^{(k)}}={Y}_{A^{(k')}}$ and ${Y}_{V^{(k)}}={Y}_{V^{(k')}}$, $1 \leq k, k' \leq K$, then Prop.~\ref{prop:uniform} holds for all the allies and adversaries belonging to different nodes if the local datasets at each node are i.i.d.
\end{manualtheorem}

When the nodes have different objectives, we show that the importance of each objective is proportional to the number of nodes implementing it; see Prop.~\ref{prop:dist_obj} in Appendix.~\ref{suppsub:proof-4}.


\section{EXPERIMENTAL EVALUATION/DISCUSSION}
\label{sec:results}

We now turn to an experimental evaluation of our methodology. 
We analyze EIGAN's convergence characteristics and compare its performance with relevant baselines in Section~\ref{ssec:c-eigan-expt}, and evaluate D-EIGAN compared to the centralized case and as the system characteristics change in Section~\ref{ssec:d-eigan-expt}.

\textbf{Datasets.} We consider datasets: MNIST \citep{mnist}, MIMIC-III \citep{mimiciii}, Adult \citep{Dua:2019},
and FaceScrub~\citep{facescrub_dataset}. MNIST consists of 60,000 handwritten digits with labels 0-9. MIMIC has medical information 
from hospitals with attributes, such as vitals and medication; we obtain a dataset consisting of 58,976 patients by joining multiple tables on patient IDs. Adult consists of 45,223 records extracted from the 1994 census data.
Facescrub is a dataset comprising over 22,000 images of celebrities with identity and gender labels. 

\textbf{Objectives.} In MIMIC, we consider survival (2-class) as the ally objective, and gender (2-class) and race (3-class) as adversary objectives. In the FaceScrub dataset, as in \cite{bertran2019adversarially}, the ally objective is user identity (200-class), and the adversary objective is gender (2-class). In MNIST, we consider whether a digit is even or odd (2-class) as the ally objective, and the label of the digit (10-class) as the adversary objective. In Adult, as in \cite{advr_rep_learn}, the ally objective is an annual income classification (more or less than 50K) and the adversary objective is gender. We also generate synthetic Gaussian datasets to analyze the effect of ally/adversary class overlap in some experiments.

\textbf{Implementation.}
We use fully connected networks (FCNs) for the encoder, allies, and adversaries in the experiments on MIMIC and the synthetic datasets.
The FCN encoder uses ReLU \citep{relu} activation for the hidden layers and tanh activation for the final fully-connected layer, whereas the ally and adversaries use sigmoid activation in the final layer. We use dropout \citep{dropout} and L2-regularization to prevent network overfitting. For FaceScrub, we employ U-Net \citep{unet} for the encoder and Xception-Net \citep{xception} for the ally/adversary as in \cite{bertran2019adversarially}. For Adult, we employ linear FCN as in \cite{advr_rep_learn}. Unless otherwise stated, we set $\alpha = 0.5$ (i.e., equal privacy/predictivity importance). We train to minimize CE loss over 70/30 training/test splits on a system with 8 GB GPU and 64 GB RAM. 

\begin{figure}[t!]
\begin{center}
    \centerline{\includegraphics[width=1.0\columnwidth]{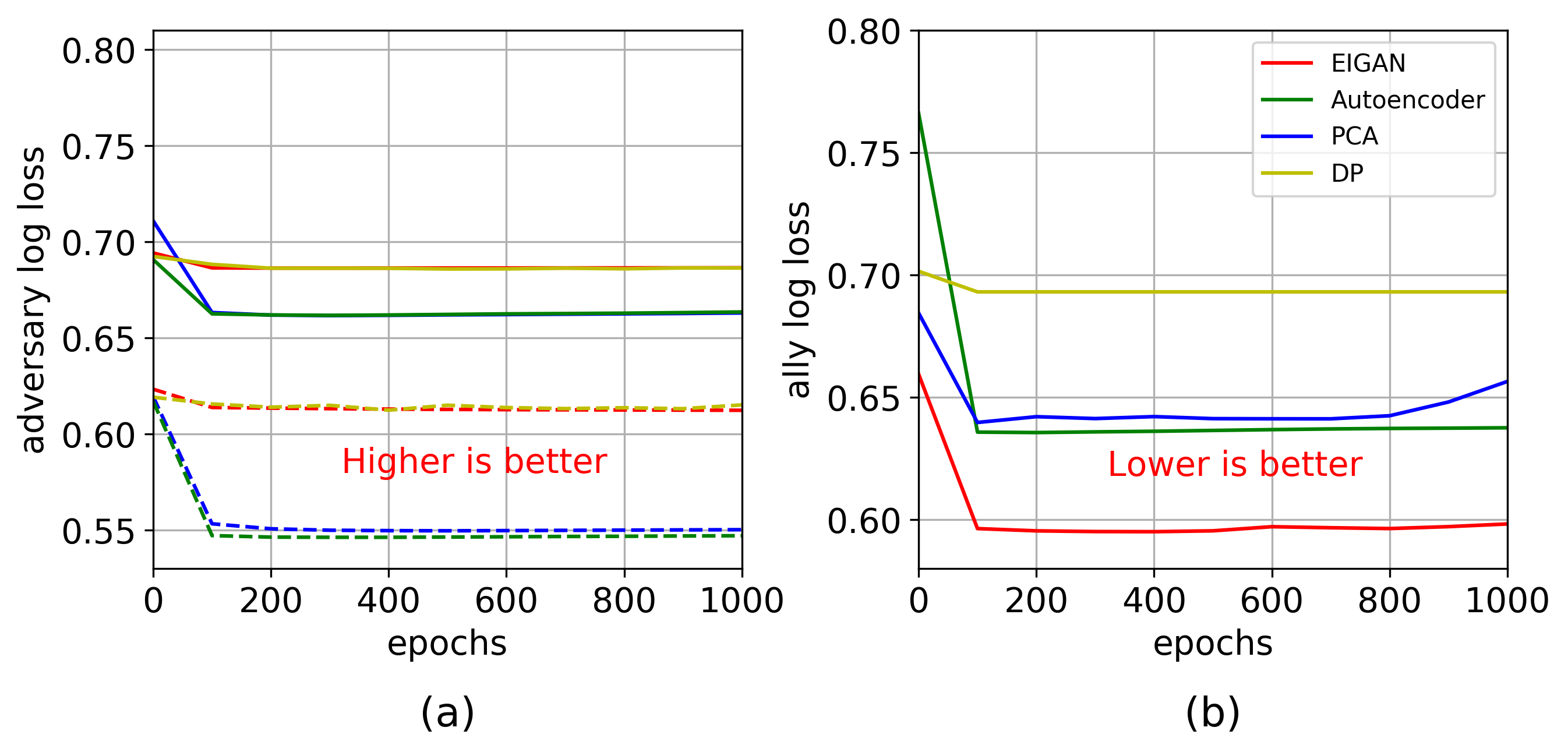}}
    \vskip -0.15in
    \captionof{figure}{\small{Predictivity and privacy comparison between EIGAN and baselines on MIMIC. (a) On the adversary prediction (gender, solid lines and race, dashed lines), EIGAN matches DP's performance (by tuning DP's noise). (b) On the ally prediction (survival), EIGAN achieves noticeable improvement over the baselines.}}
    \label{fig:mimic_half}
\end{center}
\vskip -0.15in
\end{figure}

\begin{figure}[t!]
\begin{center}
\centerline{\includegraphics[width=0.6\columnwidth]{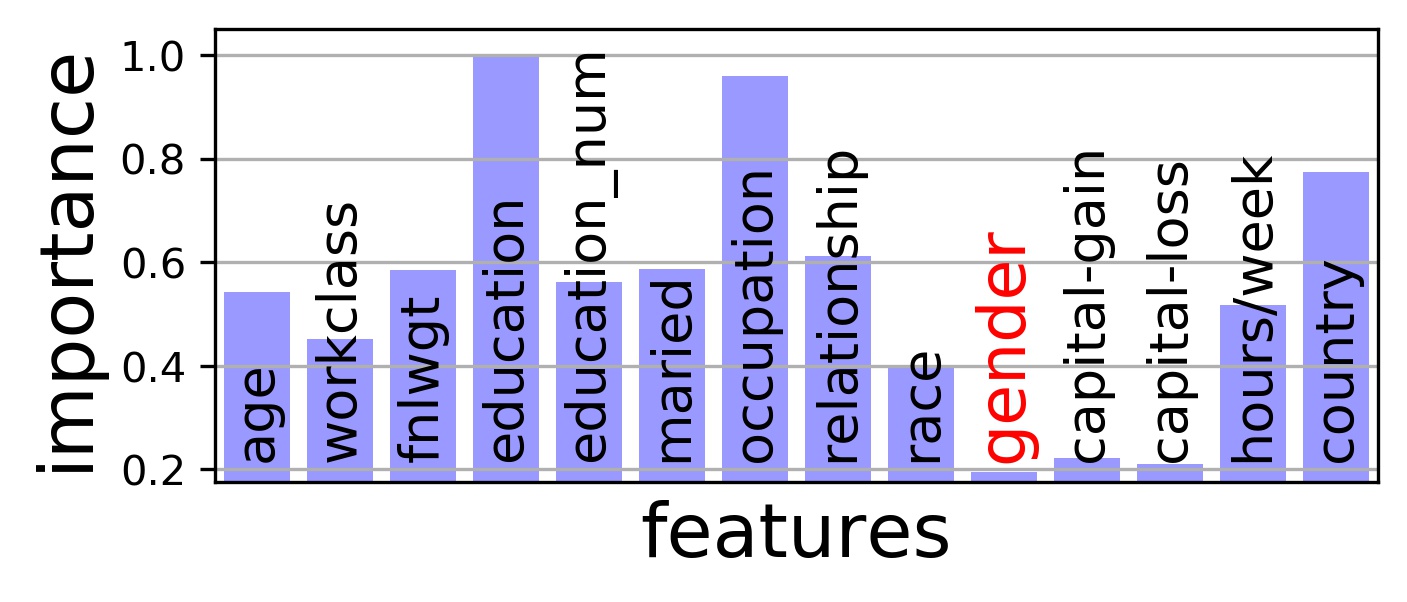}}
    \vskip -0.15in
\caption{\shams{\small{Feature importance derived from EIGAN encoder on Adult dataset results summarized in Table~\ref{table:bertran_compare}.}}}
\label{fig:feature-importance}
\end{center}
\vskip -0.25in
\end{figure}

\begin{table}[t]
    \centering
    \vspace{0.05in}
    \resizebox{\columnwidth}{!}{\begin{tabular}{| c | c | c | }
   
      \hline
      \textbf{Model} & \textbf{Ally} (accuracy) & \textbf{Adv.} (accuracy) \\
      \hline\hline
      \textbf{Resnet152} Unencoded & 0.99 & 0.99\\
       \hline
      Resnet152 & 0.85 & 0.45 \\
      ResNext101 & 0.86 & 0.42 \\
      Resnet101 & 0.88 & 0.64 \\
      Resnet50 & 0.87 & 0.56 \\
      WideResnet101 & 0.85 & 0.42 \\
      VGG19 & 0.77 & 0.42\\
      \hline
    \end{tabular}}
    \vskip -0.05in
    \caption{\small{Accuracy of various architectures used to infer ally (even/ odd) and adversary (digits 0-9) objectives on MNIST encoded using ResNet152-trained EIGAN. We see that the ally accuracies are consistent across network architectures, and the adversary accuracies remain significantly below the performance on the unencoded data.}}
    \label{table:robust_compare}
\end{table}

\textbf{Baselines.} We consider six baselines: principal component analysis (PCA) \citep{pca}, autoencoders \citep{autoencoder}, differential privacy (DP) in the form of Laplace Mechanism as in \cite{privacy_gan}, and the methods in \cite{advr_rep_learn}, \cite{bertran2019adversarially}. Autoencoders and PCA preserve information content and do not have explicit privacy objectives; they are expected to give encoded data that has good predictivity.
PCA chooses the number of components retaining 99\% of the variance, and we train the autoencoder to transform data to the same dimensional space as PCA. As discussed in Section~\ref{sec:intro}, DP is widely used for context-agnostic privacy. For DP, we employ the Laplace mechanism
\citep{privacy_gan}. \cite{bertran2019adversarially} is the most recent state-of-the-art in adversarial PRL; in this case, we use their open-source implementation and compare on the setting described in their paper. We also compare against the closed form optimal solution of \cite{advr_rep_learn} for linear maps on their Adult dataset use case, where \cite{advr_rep_learn} outperforms \cite{advr_info_leakage,louizos2015variational,xie2017controllable,pmlr-v28-zemel13}.

All of our code using PyTorch \citep{NEURIPS2019_9015} and trained models are available at \url{https://github.com/shams-sam/PrivacyGANs}. For each experiments, we report cross-entropy loss and/or accuracy from the testing step of PRL.

\subsection{Centralized EIGAN}
\label{ssec:c-eigan-expt}

\textbf{Performance comparison.} 
We first compare the ally and adversary losses over training epochs between EIGAN, autoencoder, PCA, and DP on the MIMIC dataset in Fig.~\ref{fig:mimic_half}. Note that the recent baselines~\citep{advr_rep_learn,bertran2019adversarially} cannot handle multiple adversary objectives.
It is observed in (a) that EIGAN is able to match the adversary losses of DP, while in (b) the EIGAN ally loss matches that of PCA and autoencoder while outperforming DP by a significant margin. Thus, EIGAN is capable of achieving private representations while simultaneously maintaining the predictivity of the encoded representations.

Next, we compare EIGAN with \cite{advr_rep_learn} and \cite{bertran2019adversarially} on the Adult and Facescrub dataset settings considered in these works, respectively. Note that the linearity requirement in \cite{advr_rep_learn} impedes its usage on non-linear models like the U-Net and Xception-Net employed for Facescrub by \cite{bertran2019adversarially}.  For comparison, we adjust $\alpha$ in \eqref{eqn:log_loss} to equalize the resulting adversary performances between the models. Table~\ref{table:bertran_compare} gives the results: EIGAN matches the performance of \cite{advr_rep_learn}'s optimal closed-form solution on Adult. On the Facescrub dataset, it displays a 47\% improvement in the ally's task of identity recognition when compared to \cite{bertran2019adversarially}. This validates our choice of optimization using cross-entropy loss in \eqref{eqn:log_loss} for PRL over the technique of optimization using KL divergence that is common in recent PRL literature \citep{bertran2019adversarially,gap,ppan}.

\textbf{Robustness of learned representation.} \shams{In Fig.~\ref{fig:feature-importance} we consider the importance placed by EIGAN encoder on input features of Adult dataset for learning the private representations. It can be observed that the importance of gender and it's correlated features is very low. This implies that the learnt representations minimize the signals w.r.t adversary's interest, i.e., gender.} We next consider the robustness of EIGAN's learned representation to ally and adversary architectures that deviate from the one used for training. 
Table~\ref{table:robust_compare} shows the performance of varying architectures (ResNet \citep{he2016deep}, ResNext \citep{resnext}, etc.) for allies and adversaries applied to the data encoded using EIGAN trained with ResNet152 adversary on MNIST. We see that the representations learned by EIGAN are able to obfuscate adversary targets from the other networks. Adversary accuracy remains significantly below the performance on the unencoded data, validating the robustness to differences between simulated and actual adversaries.

\begin{figure}[t!]
\begin{center}
\centerline{\includegraphics[width=0.95\columnwidth]{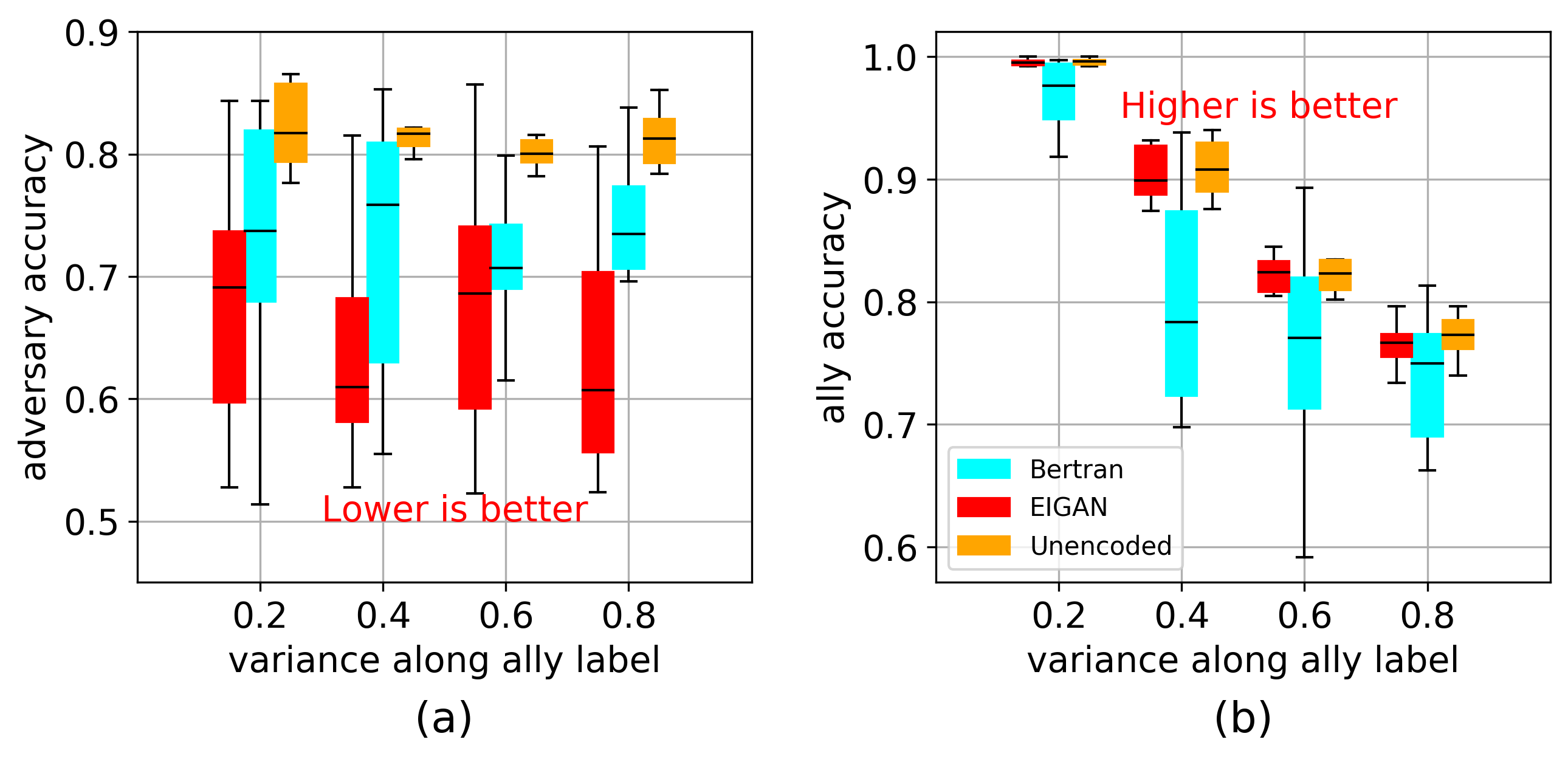}}
    \vskip -0.15in
    \caption{\small{Effect of varying the ally class overlap (by changing the variances of synthetic Gaussian data) on the performance of EIGAN, \cite{bertran2019adversarially}, and the unencoded data. (a) and (b) plot the achieved accuracies of the adv. and ally objectives, respectively. EIGAN is able to consistently outperform both baselines on the adversary objective, and obtains performance close to the unencoded data for the ally.
    }}
    \label{fig:gaussian_bert_ally}
\end{center}
\vskip -0.15in
\end{figure}

\begin{figure}[t!]
\begin{center}
\centerline{\includegraphics[width=1.0\columnwidth]{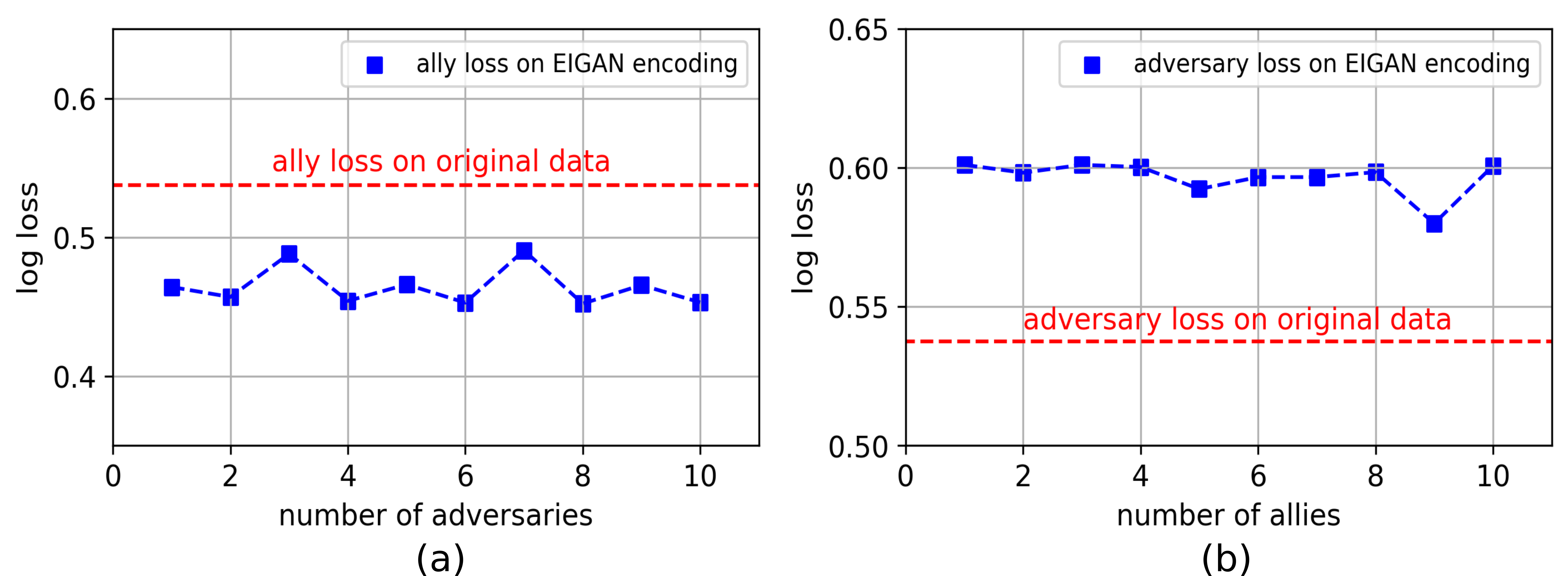}}
    \vskip -0.1in
    \caption{\small{EIGAN's effect of the number of (a) adversaries, and (b) allies on the testing loss for MIMIC-III. The ally/adversary objectives are chosen as different attributes from the source. The achievable loss is reasonably constant and is not affected by addition of more allies/adversaries. }}
    \label{fig:num_ally_advr_comparison}
\end{center}
\vskip -0.25in
\end{figure}

\textbf{Varying ally/adversary overlap.} Next, we consider the effect of class overlap for the ally/adv. objectives on model performance. To do this, we generate a 2D dataset consisting of four Gaussians with means at $(x,y) = (1,1), (1,2), (2,1), (2,2)$, each corresponding to one class. The variance of these Gaussian-distributed classes is adjusted to achieve varying degrees of overlap. Fig.~\ref{fig:distributed_num_nodes}(c) shows an instance of this dataset: the ally is interested in differentiating color, while the adversary wants to differentiate shape. Fig.~\ref{fig:gaussian_bert_ally} shows the effect of the ally label variance on the resulting accuracies for EIGAN, the method in \cite{bertran2019adversarially}, and the unencoded data. 
As the ally variance increases, we observe that (a) the accuracy of the adversary for EIGAN remains consistently lower than that of the others, while (b) the accuracy on the ally objective for EIGAN remains higher than that of \cite{bertran2019adversarially} and is comparable to the unencoded case. Similar results are seen on changing the adversary variance (see Appendix.~\ref{suppsub:eigan-bertran}).

\textbf{Varying system dimensions.} We also consider the impact of the encoding dimension $l$ and the number of allies/adversaries on EIGAN's performance using MIMIC. We summarize our key findings here: (i) We observe (in Fig.~\ref{fig:num_ally_advr_comparison}) that the final test loss obtained by an adversary (ally) under varying number of allies (adversaries) stays reasonably constant. (ii) We find (in Appendix.~\ref{suppsub:eigan-encoder-dim}) that as encoding dimension $l$ is increased, EIGAN converges faster (fewer epochs), and is able to achieve a lower testing loss, even as $l$ exceeds the input dimension.  Thus, encodings are robust to the number of objectives that are included in EIGAN.

\begin{figure}[t!]
\begin{center}
\centerline{\includegraphics[width=1.0\columnwidth]{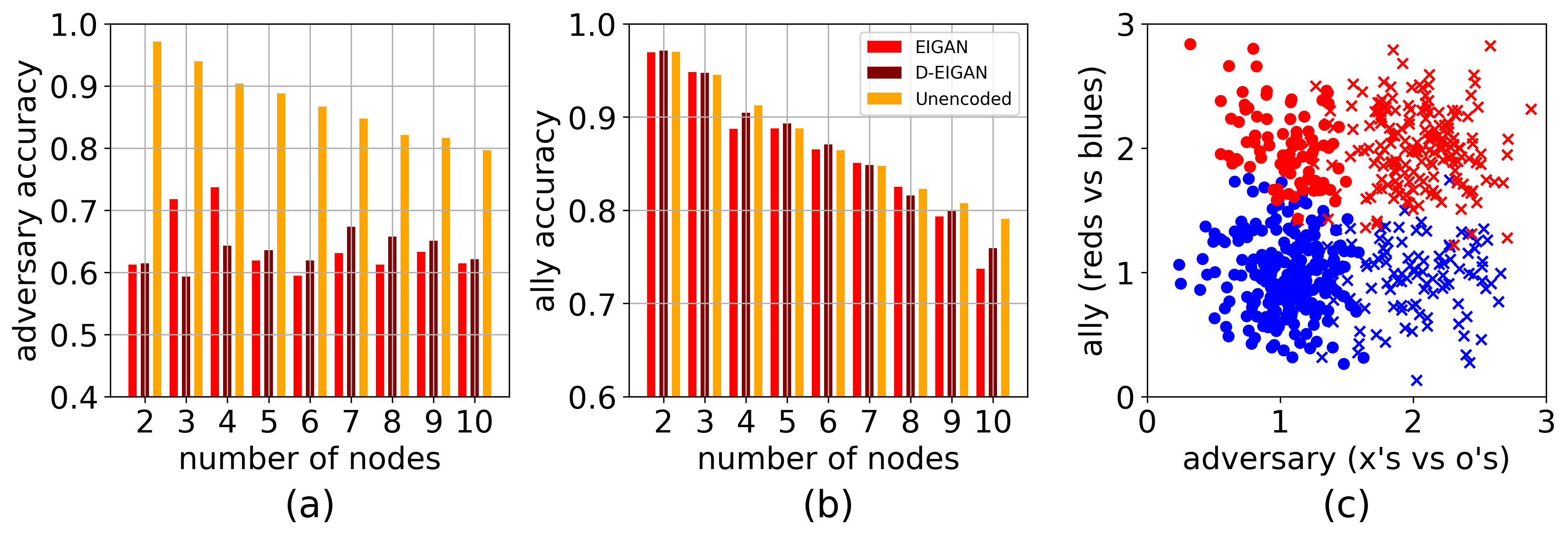}}
    \vskip -0.12in
    \caption{\small{Comparison of (a) adversary and (b) ally performance as the number of nodes in the system is increased from $K = 2$ to $10$, for D-EIGAN ($\phi, \delta = 1$), EIGAN, and unencoded. Node $k$'s data, $k = 1,...,K$ is generated from four Gaussians centered on a unit square, each with $\sigma^2 = 0.1k$, i.e. increasing variance. (c) visualizes the ally (reds vs. blues) and adv. (x's vs. o's) objectives for node $k = 3$. As expected, the ally performs worse with higher $K$, but D-EIGAN is able to match EIGAN's performance.}}
    \label{fig:distributed_num_nodes}
\end{center}
\vskip -0.25in
\end{figure}

\subsection{Distributed EIGAN (D-EIGAN)}
\label{ssec:d-eigan-expt}

\textbf{Varying number of nodes.} For the distributed case, we first study the effect of increasing the number of training nodes $K$. We use synthetic Gaussian data and generate non-i.i.d. data distributions across the nodes by increasing the variance of the Gaussians at each subsequent node $k$ (Fig.~\ref{fig:distributed_num_nodes}(c) shows the distribution for $k=3$). Fig.~\ref{fig:distributed_num_nodes}(a)\&(b) show the resulting ally and adversary accuracies obtained when trained on D-EIGAN, on EIGAN, and on the unencoded data. As $K$ increases, the ally performance degrades in each case, due to the higher variance for each class exhibited in the overall dataset $\mathcal{X}$. Overall, we see that D-EIGAN matches the performance of the centrally-trained EIGAN in both metrics, which shows that distributed learning can yield a comparable solution when all parameters ($\phi = 1$) are synchronized frequently ($\delta = 1$). See Appendix.~\ref{suppsub:d-eigan-numnodes} for results on i.i.d. data.

\textbf{Varying objectives across nodes.} Next, we study the effect of varying ally and adversary objectives across nodes. For this, we consider the MIMIC dataset and allocate the dataset across $K=10$ nodes randomly so that each has a different distribution of patient data. In Fig.~\ref{fig:distributed_eigan_mimic}, we show the accuracies achieved by D-EIGAN on the one ally and two adversary objectives for two cases: (a) when each node has all three objectives, and (b) when each node has the ally objective, but half have one adversary objective and half have the other. The EIGAN performance on the full dataset is included for comparison. The dataset is distributed in a non-i.i.d manner across nodes by non-uniform random sampling. We see that D-EIGAN in (a) only has a slight improvement over (b) in the case of the gender adversary, which indicates that D-EIGAN is robust to varying node objectives, even though the aggregation period has increased ($\delta = 2$) and the fraction of parameters shared has decreased ($\phi = 0.8$) from Fig.~\ref{fig:distributed_num_nodes}. The implication of this is that once a data sample is encoded at a node via D-EIGAN, it can be transferred to another node with different objectives and securely applied to ally tasks there, e.g., referring to the healthcare use case in Section~\ref{sec:intro}, if a patient moves to a different hospital with different health regulations. Similar conclusions are drawn when the data is i.i.d across nodes (see Appendix.~\ref{suppsub:d-eigan-var-obj}).

\begin{figure}[t!]
\begin{center}
\centerline{\includegraphics[width=0.95\columnwidth]{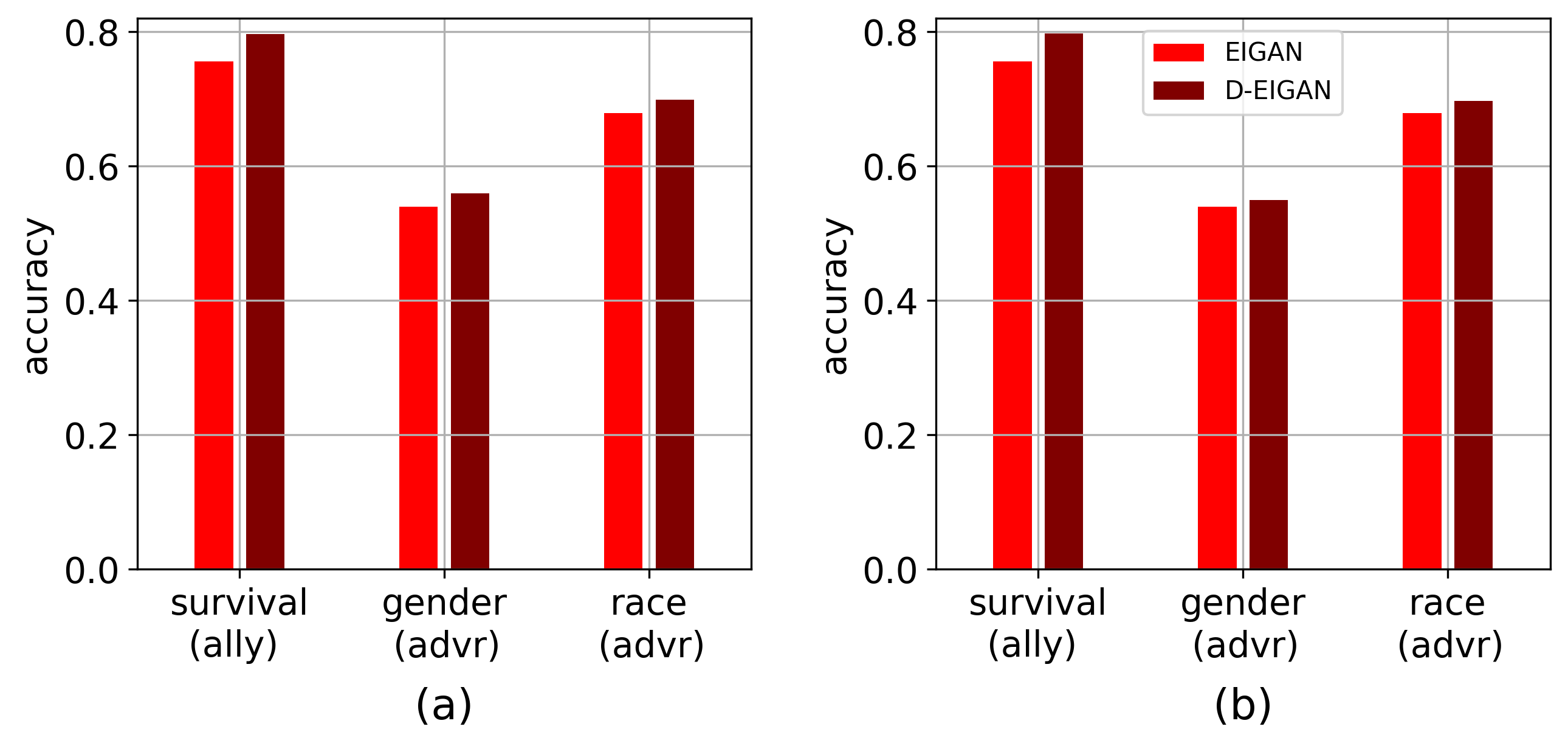}}
    \vskip -0.15in
    \caption{\small{Performance of ally and adversary objectives trained on D-EIGAN ($K=10, \phi=0.8, \delta=2$, non-i.i.d) for MIMIC in the cases of (a) all nodes having all three objectives and (b) each node having the ally but only one of the adversaries. 
    The distribution of objectives across the nodes does not affect the resulting accuracies.}}
    \label{fig:distributed_eigan_mimic}
\end{center}
\vskip -0.15in
\end{figure}

\begin{figure}[t!]
\begin{center}
\centerline{\includegraphics[width=0.95\columnwidth]{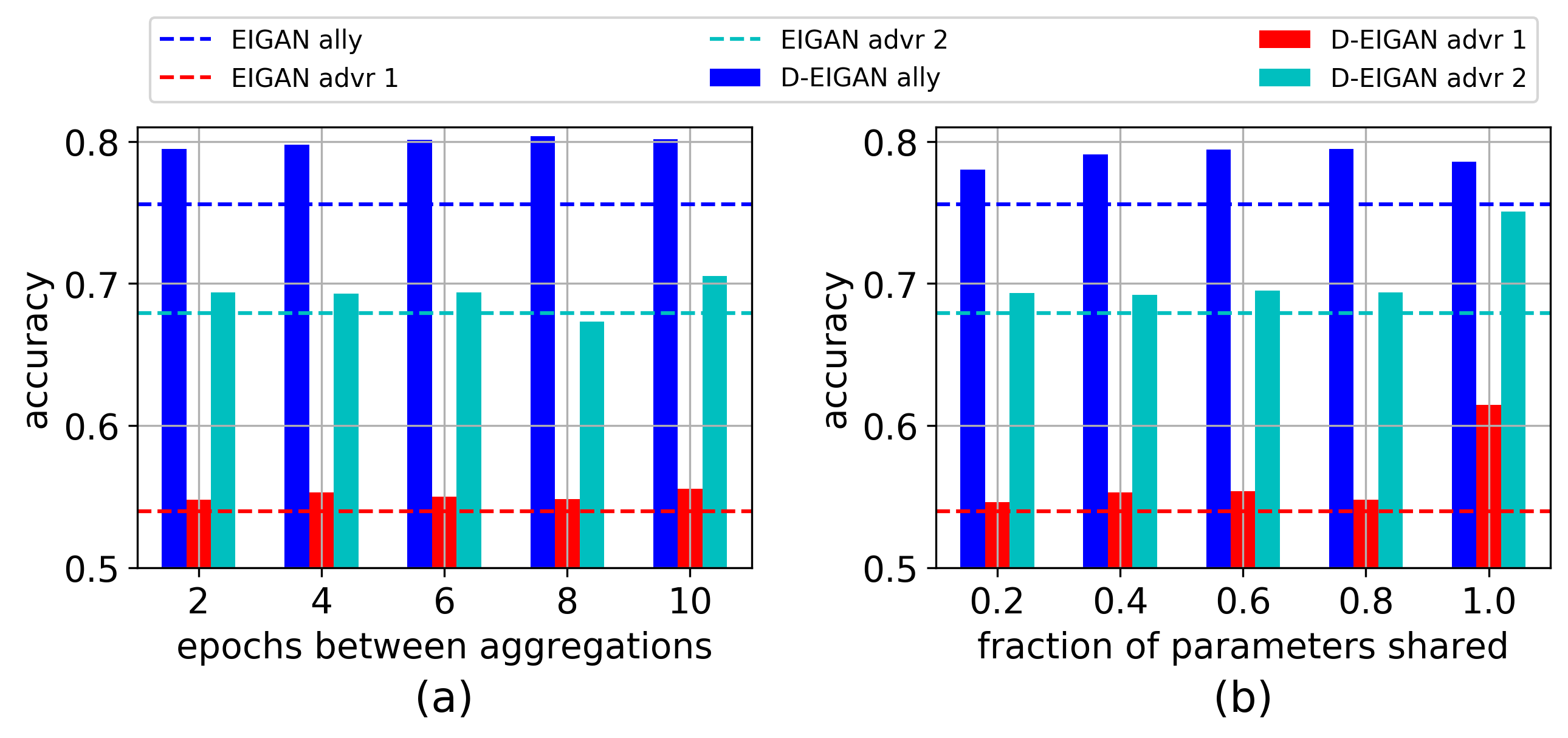}}
    \vskip -0.12in
    \caption{\small{Effect of (a) aggregation frequency $\delta$ ($\smash{\phi = 0.8}$) and (b) sparsification factor $\phi$ ($\smash{\delta = 2}$) on ally/adv. performance on D-EIGAN for the non-i.i.d case in Fig.~\ref{fig:distributed_eigan_mimic}(a). The robust performance shows that D-EIGAN can be applied in communication-constrained environments.}}
    \label{fig:distributed_control_params}
\end{center}
\vskip -0.25in
\end{figure}

\textbf{Varying synchronization parameters.} Finally, we consider the impact of the aggregation period $\delta$ and the sparsification factor $\phi$ on D-EIGAN. This has implications for the communication resources between the nodes and the server required for training, as discussed in Section~\ref{ssec:d-arch}. For this experiment, we use the setting from the experiment in Fig.~\ref{fig:distributed_eigan_mimic}(a), i.e., with non-i.i.d data and all nodes having all three objectives. In Fig.~\ref{fig:distributed_control_params}, we show the performance of D-EIGAN as (a) $\delta$ increases and (b) $\phi$ increases (EIGAN shown for comparison). In (a), we see that D-EIGAN is robust to the number of training epochs between aggregations, implying that it can be increased to limit the frequency of transmissions to/from the server. In (b), we similarly observe generally robust performance as the fraction of sharing changes, though surprisingly, the performance noticeably \textit{decreases} once $\phi$ reaches $1$ and all are shared. A similar effect was observed by \cite{sattler2019robust}, that in the case of distributed model training over non-i.i.d datasets, sparsification actually can \textit{enhance} performance because it minimizes the effect of data bias at each node on the global model. Indeed, in the i.i.d case, we do not observe this effect (see Appendix.~\ref{suppsub:d-eigan-sync-params}). Thus, we conclude that D-EIGAN is well suited for communication-constrained environments.


\section{CONCLUSION}
\label{sec:conc}

We developed the first methodology for generalized and distributable PRL. EIGAN accounts for the presence of multiple allies and adversaries with potentially overlapping objectives, and D-EIGAN addresses privacy concerns and resource constraints in scenarios with decentralized data. We proved that for an optimal encoding, the adversary's output from EIGAN follows a uniform distribution, and that dependencies between ally and adversary interests requires careful balancing of objectives in encoder optimization. Our experiments showed that EIGAN outperforms six baselines in jointly optimizing predictivity and privacy on different datasets and system settings. They also showed that D-EIGAN achieves comparable performance to EIGAN with different numbers of training nodes and as the training parameters vary to account for communication constraints.


\subsubsection*{Acknowledgements}
\vspace{-1mm}
S.S. Azam, C. Brinton, and S. Bagchi were supported by the Northrop Grumman Cybersecurity Research Consortium.


{
\bibliographystyle{plainnat}
\bibliography{ms}
}


\clearpage
\appendix

\thispagestyle{empty}
\onecolumn \makesupplementtitle

\setcounter{section}{0}
\renewcommand{\thesection}{\Alph{section}}
\renewcommand{\thefigure}{\arabic{figure}}


\section{Propositions and Proofs}
\label{supp:proofs}

\subsection{Proof of Proposition~\ref{prop:uniform}}

Suppose $\smash{\hat{Y}_{A_i} = p_{_{A_i}}\left({Y}|E(\mathcal{X})\right)}$ and $\smash{\hat{Y}_{V_j} = p_{_{V_j}}\left({Y}|E(\mathcal{X})\right)}$, where $p_{_{A_i}}\left({Y}|E(\mathcal{X})\right)$ and $p_{_{V_j}}\left({Y}|E(\mathcal{X})\right)$ denote the posterior probabilities of successful inference of target labels $Y \sim \mathcal{Y}_{A_i}$ and sensitive labels $Y \sim \mathcal{Y}_{V_j}$ for ally $A_i$ and adversary $V_j$, respectively, given the outputs encoder $E$ provides for the dataset $\mathcal{X}$. 
Then, the utilities in  \eqref{eq:lossFunctions} can be expressed as
\vspace{-4mm}

\begin{equation}\label{eqn:restated_utility}
    \begin{aligned}
  u_{_{A_i}} = \mathbb{E}_{{Y}\sim \mathcal{Y}_{_{A_{i}}}} \left[ \log\,\hat{Y}_{A_i} \right]; u_{_{V_j}} = \mathbb{E}_{Y\sim \mathcal{Y}_{_{{V}_j}}} \left[ \log\,\hat{Y}_{V_j} \right],
    \end{aligned}
\end{equation}
\vspace{-3mm}

\noindent where $\smash{1\leq i\leq n}$ and $\smash{1\leq j\leq m}$. Let $H_{Q} = \mathbb{H}(P, Q)$ denote the cross-entropy of $Q$ with respect to $P$ defined as $H_Q = \mathbb{H}(P, Q) = \mathbb{E}_{x\sim P}\left[ -\log Q \right]$, then \eqref{eqn:restated_utility} can be re-stated as:
\vspace{-4mm}

\begin{equation}\label{eqn:entropy}
    \begin{aligned}
  & u_{_{A_i}} = -H_{A_i} = - \mathbb{H}(Y\sim\mathcal{Y}_{A_i}, \hat{Y}_{A_i}),~~~~1\leq i\leq n, \\
   &u_{_{V_j}} = -H_{V_j} = - \mathbb{H}(Y\sim\mathcal{Y}_{V_j}, \hat{Y}_{V_j}),~~~~1\leq j\leq m.
    \end{aligned}
\end{equation}
\vspace{-3mm}

\noindent The maximization of ally utilities $u_{A_i}$ and minimization of adversary utilities $u_{V_j}$ $\forall i,j$ in the optimization objective \eqref{eqn:minimax_game3} can be re-written as minimization of its negative given by,
\vspace{-4mm}

{
\begin{equation}
\begin{aligned}
U' = -\sum_{i=1}^{n} \alpha_{A_{i}} u_{A_{i}} + \sum_{j=1}^{m} \alpha_{V_j} u_{V_{j}} = \sum_{i=1}^{n} \alpha_{A_{i}} H_{A_{i}} - \sum_{j=1}^{m} \alpha_{V_j} H_{V_{j}}.
\end{aligned}
    \label{eqn:minimax_game_entropy_last}
\end{equation}
}
\vspace{-4mm}

Through \eqref{eqn:minimax_game_entropy_last}, it can be observed that the minimization occurs when entropy of allies $\sum_{i=1}^{n} \alpha_{A_{i}} H_{A_{i}}$ is minimized while that of adversaries $\sum_{j=1}^{m} \alpha_{V_j} H_{V_{j}}$ is maximized. Using the definition of entropy, each of the allies and adversaries has a global optimum and can be optimized separately if their labels are non-overlapping. Note that ally and adversary entropies are non-negative, and given a fixed encoder $E$, the sum of ally entropies is minimized when individual entropies are minimized.
For each ally, individual entropy $H_{A_i}$ is minimized when $\hat{Y}_{A_i}$ takes the value of $1$ $\forall i$ as every ally label is then predicted correctly. Similarly for adversaries, each individual entropy $H_{V_j}$ is maximized when $ \hat{Y}_{V_j} = 1 / |Y_{V_j}|$ is the uniform distribution. Thus, it can be seen that, at the optimal solution, the adversaries' output follows a uniform distribution, as it minimizes the overall entropy in \eqref{eqn:minimax_game_entropy_last}, or equivalently maximizes the utility in \eqref{eqn:minimax_game3}.

Given that $(A_i,V_j) \in \mathcal{O}$ is the set of all $(i,j)$ pairs of allies $A_i$ and adversaries $V_j$ for which $Y_{A_i} \cap Y_{V_j} \neq \emptyset$, the ally and adversary objectives in  \eqref{eqn:minimax_game_entropy_last} are overlapping if $\mathcal{O} \neq \emptyset$. Given that the encoder is fixed, for allies/adversaries not included in $\mathcal{O}$, the associated utilities can be independently optimized. We are thus left with the maximization of the following:
\vspace{-3mm}

\begin{equation}
    U_{\mathcal{O}} = \sum_{(A_i,V_j) \in \mathcal{O}}  \alpha_{A_i} \cdot u_{A_i} - \alpha_{V_j} \cdot u_{V_j}.
    \label{eqn:pos_corr}
\end{equation}
\vspace{-2.5mm}

\noindent For the $k$th element in $\mathcal{O}$, $(A_{i(k)}, V_{j(k)})$, we have $Y_{A_{i(k)}}(c) = \smash{Y_{V_{j(k)}}(c) \; \forall c \in \mathcal{C}_k\;  \forall k;}$, where $\mathcal{C}_k$ is the set of indices of elements in $Y_{A_{i(k)}} \cap Y_{V_{j(k)}} \neq \emptyset$. Separating the indices $c$ for which the ally/adversary try to predict the same label (i.e. $\smash{u_{A_i}(c) = u_{V_j}(c)}$), we can express \eqref{eqn:pos_corr} as follows:
\vspace{-3mm}

{
\begin{equation}
\hspace{-4mm} 
\begin{aligned}
        U_{\mathcal{O}} = \hspace{-3mm} \quad \sum_{k} \Biggl( \underbrace{\sum_{c\in\mathcal{C}_k} (\alpha_{A_i} - \alpha_{V_j})  u_{A_i}(c)}_{\text{utility w.r.t. overlapping labels, }U_{\mathcal{O}^+}} +  \underbrace{\sum_{c\notin\mathcal{C}_k} \alpha_{A_i} u_{A_i}(c) - \alpha_{V_j} u_{V_j}(c)}_{\text{utility w.r.t. non-overlapping labels}, U_{\mathcal{O}^-}}  \Biggl).
\end{aligned}
    \label{eqn:pos_corr2}
    \hspace{-4mm} 
\end{equation}}
\vspace{-2.5mm}

\noindent The utilities in \eqref{eqn:pos_corr2} reward only one of the two discriminators $(A_i, V_j) \in \mathcal{O}$ predicting on overlapping label $c \in \mathcal{C}$ if $\alpha_{A_i} \neq \alpha_{V_j}$. If $\alpha_{A_i} = \alpha_{V_j}$ for $(A_i, V_j) \in \mathcal{O}$, then $U_{\mathcal{O}^+} = 0$, and no  optimization occurs w.r.t. the overlapping labels in $Y \sim \mathcal{Y}_{A_i}$.

\subsection{Proposition 2}
\label{suppsub:proof-2}
\begin{prop}\label{prop:nonuniform}
 Assume that the number of labels of interest is the same among all the allies and adversaries. For any adversary $V_j$, the distribution of its prediction over its set of labels of interest
does not follow a uniform distribution if sufficient weight is given to the ally utilities (i.e., $\alpha_{A_i}$, $\forall A_i$, is sufficiently large) and the distribution of prediction of one ally $A_i$, can be defined as a linear combination of the distribution of predictions of $V_j$ and that of other allies/adversaries.
\end{prop}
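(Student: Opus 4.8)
The plan is to build directly on the entropy reformulation already established in the proof of Prop.~\ref{prop:uniform}. There we reduced the minimax objective to the minimization of $U' = \sum_{i} \alpha_{A_i} H_{A_i} - \sum_{j} \alpha_{V_j} H_{V_j}$ as in \eqref{eqn:minimax_game_entropy_last}, where $H_{A_i}$ and $H_{V_j}$ are the cross-entropies of the ally/adversary predictions against their true label distributions. For the uncoupled players the same separability argument as in Prop.~\ref{prop:uniform} applies, so I would fix all predictions except those of the single ally $A_i$ and adversary $V_j$ tied by the hypothesized linear relation, and study the residual objective as a function of the adversary prediction $\hat{Y}_{V_j}$ alone. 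The equal-label-count assumption is what makes the linear relation well posed: all prediction vectors live in the same simplex $\Delta^{L-1}$ with $L = |Y_{V_j}| = |Y_{A_i}|$, so I can write $\hat{Y}_{A_i} = c_0\,\hat{Y}_{V_j} + \hat{Y}_{\mathrm{rest}}$, where $\hat{Y}_{\mathrm{rest}}$ collects the (now fixed) linear contributions of the other allies/adversaries and $c_0 \neq 0$ is the coefficient on $\hat{Y}_{V_j}$.

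Next I would substitute this relation into $U'$ to obtain a function $g(\hat{Y}_{V_j})$ and characterize its constrained optimum over the simplex via a Lagrangian $g(\hat{Y}_{V_j}) + \mu\,(\mathbf{1}^\top \hat{Y}_{V_j} - 1)$. Writing the stationarity conditions componentwise, the contribution coming from the adversary term $-\alpha_{V_j} H_{V_j}$ is, by construction, symmetric across the $L$ coordinates at the uniform point $\hat{Y}_{V_j} = (1/L)\mathbf{1}$ --- this symmetry is exactly the reason uniformity is optimal in the uncoupled case of Prop.~\ref{prop:uniform}. The ally term $\alpha_{A_i} H_{A_i}$, however, enters only through the chain rule $\partial H_{A_i}/\partial \hat{Y}_{V_j} = c_0\,\partial H_{A_i}/\partial \hat{Y}_{A_i}$, and this gradient is weighted by the true ally label distribution and by the offset $\hat{Y}_{\mathrm{rest}}$, so it is \emph{not} constant across coordinates. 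Consequently, evaluated at the uniform point, the full componentwise gradient cannot equal a single multiplier $\mu$ simultaneously for every coordinate unless $\alpha_{A_i} c_0 = 0$.

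The conclusion then follows by contraposition: since $c_0 \neq 0$, for ally weight $\alpha_{A_i}$ large enough that the interior stationary point of $g$ remains feasible (i.e. stays inside $\Delta^{L-1}$), the uniform distribution fails the KKT stationarity condition and is therefore not the maximizer; the optimizer instead tilts $\hat{Y}_{V_j}$ toward the coordinates that reduce the heavily-weighted ally cross-entropy, yielding a non-uniform adversary prediction. I expect the main obstacle to be the second step: carefully carrying the chain rule through the linear map while tracking which coordinates the ally's true-label weighting singles out, and then translating ``uniform is not a stationary point'' into ``the optimum is strictly non-uniform,'' which requires quantifying how large $\alpha_{A_i}$ must be for the perturbed optimum to stay in the interior of the simplex rather than being clipped at the boundary.
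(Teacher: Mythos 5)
Your overall strategy---substituting the hypothesized linear coupling into the cross-entropy objective \eqref{eqn:minimax_game_entropy_last} and analyzing first-order conditions for the adversary's prediction---is the same one the paper uses; the paper works with scalar outputs and the relation $\hat{Y}_{A} = \sum_{j} w_{j} \hat{Y}_{V_j}$ with no simplex constraint, while you work with prediction vectors and a Lagrange multiplier. The difference that matters is what each argument extracts from those first-order conditions. The paper \emph{solves} the stationarity equation explicitly, obtaining \eqref{eq:prop3_ent}, and reads the proposition off that formula: a valid (positive) non-uniform solution exists precisely when $\alpha_{V_n} Y_{V_n} < \alpha_A Y_A$, i.e., when the ally weight is sufficiently large; when the inequality fails, no admissible stationary point exists and the prediction reverts to the uniform behavior of Prop.~\ref{prop:uniform}. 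That threshold is the entire substance of the ``sufficient weight'' hypothesis in the statement.

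Your argument never produces such a threshold, and that is a genuine gap. The KKT-violation-at-uniform computation you describe goes through whenever $\alpha_{A_i} c_0 \neq 0$: the \emph{magnitude} of $\alpha_{A_i}$ plays no role in whether the componentwise gradient can equal a single multiplier $\mu$, so taken at face value your proof concludes that the adversary's prediction is non-uniform for \emph{every} positive ally weight. That is a different (and, within this framework, false) statement---it contradicts the regime $\alpha_{V_n} Y_{V_n} > \alpha_A Y_A$ identified in the paper's proof, where uniformity prevails. Your attempted patch---requiring $\alpha_{A_i}$ large enough that the perturbed optimum ``stays inside the simplex''---does not repair this: if the interior stationary point is infeasible, the constrained optimum sits on the simplex \emph{boundary}, which is still not the uniform point, so your dichotomy (interior non-uniform versus boundary non-uniform) cannot recover ``insufficient weight $\Rightarrow$ uniform.'' To close the gap you would need to do what the paper does: solve the stationarity system explicitly and exhibit the weight threshold at which the solution becomes admissible. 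A second, smaller issue: your claim that the adversary term's gradient is symmetric across coordinates at the uniform point requires the true label distribution to be uniform, since $\partial H_{V_j}/\partial \hat{Y}_{V_j}(c) = -Y_{V_j}(c)/\hat{Y}_{V_j}(c) = -L\,Y_{V_j}(c)$ at the uniform prediction; you use this assumption implicitly but never state it.
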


\begin{proof}
Without loss of generality, consider a system with one ally network with a scalar output $\hat{Y}_{A}$ and $m$ adversary networks with scalar outputs $\hat{Y}_{{V_j}}$ for $1 \leq j \leq m$. The true distribution of each predicted output is $\mathcal{Y}_{A}$ for the ally and $\mathcal{Y}_{{V_j}}$ for the adversaries, and $Y_{A}$ and $Y_{{V_j}}$ are the actual labels drawn from those distributions respectively. The true values and predictions between that of the ally and the adversaries have the relation, $Y_{A} = \sum_{j=1}^{m} w_{j} Y_{{V_j}}$, and $\smash{\hat{Y}_{A} = \sum_{j=1}^{m} w_{j} \hat{Y}_{{V_j}}}$ where $w_{j}$ is scaling weight.  The cross entropy of the entire system is given by $U = \alpha_A Y_{A} \log(\hat{Y}_{A}) - \sum_{j=1}^{m} \alpha_{V_j} Y_{{V_j}} \log(\hat{Y}_{{V_j}})$. Optimizing for the output of a specific adversary $V_n$, we obtain:
\vspace{-2mm}

{
\begin{equation}
    \hat{Y}_{{V_n}} =  \frac{\sum_{j\neq n} w_{j} \hat{Y}_{{V_j}} } {\alpha_A Y_{A} w_{n}}  \left( \frac{1}{\alpha_{V_n} Y_{{V_n}}} - \frac{1}{\alpha_A Y_{A}} \right)^{-1}.
\label{eq:prop3_ent}
\end{equation}
}
\vspace{-3mm}

\noindent Notably, $\hat{Y}_{{V_n}}$ only returns a non-uniform distribution when $\alpha_{V_n} Y_{{V_n}} < \alpha_A Y_{A}$. If the weight $\alpha_A$ is not large enough to maintain the inequality, the value of $\hat{Y}_{{V_n}}$ cannot be obtained via \eqref{eq:prop3_ent} and will have a uniform distribution.
If $\alpha_{V_n} Y_{{V_n}} = \alpha_A Y_{A}$, then the cross entropy $U=0$ and no optimization occurs. 
\end{proof}

\subsection{Proof of Proposition~\ref{prop:distributed_iid}}
\label{suppsub:proof-3}
Given that the global encoder is the average of the local encoders in the federated learning procedure for a single synchronization across $K$ local nodes, the maximization of the expectation in \eqref{eqn:minimax_game4} can be described as the maximization of ally utilities and minimization of adversary utilities given by:
\vspace{-4mm}

{
\begin{equation}
\begin{aligned}
U  = \frac{1}{K} \sum_{k=1}^{K} \left( \sum_{i=1}^{n_{(k)}} \alpha_{A_i^{(k)}} u_{A_i^{(k)}} - \sum_{j=1}^{m_{(k)}} \alpha_{V_j^{(k)}} u_{V_j^{(k)}}  \right).
\end{aligned}
    \label{eqn:distributed_entropy}
\end{equation}
}
\vspace{-2mm}

In \eqref{eqn:distributed_entropy}, $A_{i}^{(k)}$ and $V_{i}^{(k)}$ refer to the $i^{\text{th}}$ ally or adversary of the $k^{\text{th}}$ local node. Since data at each node is i.i.d, the distributions $\mathcal{Y}$ are the same at each node, and thus each node has the same objective function. Using the result of Prop.~\ref{prop:uniform} and assuming that $A_{i}^{(k_1)}, V_{j}^{(k_1)} = A_{i}^{(k_2)}, V_{j}^{(k_2)}$ $\forall i,j, k_1, k_2$ (i.e., the ally and adversary labels are same across all nodes), the output of the adversaries at each node follow a uniform distribution. 

The ally and adversary objectives in \eqref{eqn:distributed_entropy} are overlapping if $\mathcal{O} \neq \emptyset$ given that $(A_i,V_j) \in \mathcal{O}$ is the set of all $A_i, V_j$ pairs for which $Y_{A_i} = Y_{V_j}$. Since each of the local nodes have the same overlapping ally/adversary labels with potentially different weights $\alpha_{A_{i}^{(k)}}$ and $\alpha_{V_{j}^{(k)}}$, their utilities can be expressed using entropy as in \eqref{eqn:entropy}. The final optimization of the distributed system can be expressed as the minimization of following:

{
\begin{equation}
    U_{\mathcal{O}} = \quad \sum_{(A_i,V_j) \in \mathcal{O}}  \left(\sum_{k=1}^{K}(\alpha_{A_i^{(k)}} - \alpha_{V_j^{(k)}}) \cdot u_{A_i^{k}}\right).
    \label{eqn:dist_pos_corr}
\end{equation}
}
\vspace{-3mm}

The entropy values given in \eqref{eqn:dist_pos_corr} reward only one of the two discriminators predicting label $Y_{A_i}$ if $\sum_{k=1}^{K}\alpha_{A_i^{(k)}} \neq \sum_{k=1}^{k}\alpha_{V_i^{(k)}}$. If $\sum_{k=1}^{K}\alpha_{A_i^{(k)}} = \sum_{k=1}^{K}\alpha_{V_i^{(k)}}$, these two networks have no contribution to $U_{\mathcal{O}}$, and no optimization occurs..

 \vspace{-.5mm}
\subsection{Proposition 4}
\label{suppsub:proof-4}
\vspace{-1mm}
\begin{manualtheorem}{4}
\label{prop:dist_obj}
If the allies and adversaries located at the $K$ nodes of D-EIGAN have non-overlapping target sets, i.e.,
${Y}_{A^{(k)}} \neq {Y}_{A^{(k')}}$ and ${Y}_{V^{(k)}} \neq {Y}_{V^{(k')}}$, $1 \leq k, k' \leq K$, then individual encoders under D-EIGAN 
consider the union of these local allies, $\bigcup_{k=1}^{K} {Y}_{A^{(k)}}$, and adversaries ,$\bigcup_{k=1}^{K} {Y}_{V^{(k)}}$ for optimization as a result of the global aggregation step. The weights $\alpha_{A_i^{(k)}}\text{ and } \alpha_{V_i^{(k)}}$ associated with the allies/adversaries are scaled by the ratio of the number of nodes that implement them locally to the total number of nodes.
\end{manualtheorem}
 
\begin{proof}
Without loss of generality, consider a two network D-EIGAN. Let node 1 have 2 allies and 1 adversary with objectives: $Y_{A_c}$, $Y_{A_1}$, and $Y_{V_1}$, and node 2 have 2 allies and 1 adversary with objectives: $Y_{A_c}$, $Y_{A_2}$ and $Y_{V_2}$. Here, objective $Y_{A_c}$ is common among them, while the rest are different. Utilities of individual nodes can be calculated using \eqref{eqn:minimax_game3}:
\vspace{-4mm}

{
\begin{align}
    U^{(1)} &= \alpha_{A_c}\cdot u_{A_c} + \alpha_{A_1}\cdot u_{A_1} - \alpha_{V_1}\cdot u_{V_1},\\
    U^{(2)} &= \alpha_{A_c}\cdot u_{A_c} + \alpha_{A_2}\cdot u_{A_2} - \alpha_{V_2}\cdot u_{V_2}.
\end{align}
}
\vspace{-5mm}

\noindent Under federated training, the equivalent loss function that is optimized by the D-EIGAN can be calculated using \eqref{eqn:minimax_game4}:
\vspace{-2mm}

{
\begin{equation}
    U = \alpha_{A_c}\cdot u_{A_c} + \frac{\alpha_{A_1}}{2}\cdot u_{A_1} - \frac{\alpha_{V_1}}{2}\cdot u_{V_1} + \frac{\alpha_{A_2}}{2}\cdot u_{A_2} - \frac{\alpha_{V_2}}{2}\cdot u_{V_2},
\end{equation}
}
\vspace{-3mm}

\noindent which shows that the overall objective under D-EIGAN considers all the objectives, but the associated weights are lower for non-common allies/adversaries. In contrast to a D-EIGAN where all allies and adversaries are common across nodes, the  difference is the weights associated with objectives.
\end{proof}


\section{Pseudocode of EIGAN}
\label{supp:algorithms}

In this section, Algorithm~\ref{alg:eigan} presents the step-by-step implementation of centralized EIGAN training. It is an iterative mini-batch stochastic gradient descent procedure in which we update the weights for the encoder and allies/adversaries alternately until convergence. The learning rates for the encoder, allies and adversaries are controlled using parameters $\eta_{E}$, $\eta_{A}$, and $\eta_{V}$.

\renewcommand{\thealgorithm}{A1}
\begin{algorithm}[h!] {\small 
   \caption{EIGAN training}
   {\footnotesize
\label{alg:eigan}
\begin{algorithmic}[1]
   \State \textbf{\underline{Notations:}}
   \vspace{0.05in}
    \State ${(\cdot)_j}$ denotes the value for the $j$th minibatch
    \State ${\mathcal{L}_{A_{i}}}$ denotes the loss of ally $A_{i}$
    \State ${\mathcal{L}_{{V_{i}}}}$ denotes the loss of the adversary $V_{i}$
      \State $\eta_{_E},\eta_{_A}$, $\eta_{_V}$: learning rates of the encoders, allies and adversaries
   \vskip 0.05in
   \State \textbf{\underline{Training:}}
  \vspace{0.05in}
  \State initialize $\alpha$ used in loss function \eqref{eqn:log_loss}
  \State initialize $\theta_{A_i}\text{'s}$ and $\theta_{V_j}\text{'s}$ and $\theta_E$ to start the training
   \For{number of training epochs}
        \State Sample a minibatch set $J$ of data points
        \State Compute encoder loss using \eqref{eqn:log_loss}: ${\mathcal{L}}_{E} = \frac{1}{|J|} \sum_{j \in J} \left(\mathcal{L}_{{E}}\right)_j$
        \State Update encoder parameters: $\theta_{E} \leftarrow \theta_{E} - {\eta}_{_E} \cdot \nabla_{\theta_{E}} {\mathcal{L}}_{E}$
        \State Compute allies/adversaries losses using \eqref{eqn:log_loss}:
        \begin{center}
            ${\mathcal{L}_{A_{i}}} = - \frac{1}{|J|} \sum_{j \in J} \left({\mathcal{L}_{A_{i}}}\right)_{j}$, \;  ${\mathcal{L}_{{V_{i}}}} = - \frac{1}{|J|} \sum_{j \in J} \left({\mathcal{L}_{{V_{i}}}}\right)_{j}$
        \end{center} 
        \State Update local allies/adversaries parameters: \begin{center}$\theta_{A_{i}}\leftarrow \theta_{A_{i}} - \eta_{_A} \cdot \nabla_{\theta_{A_{i}}} {\mathcal{L}}_{A_{i}}$, \; $\theta_{V_{i}} \leftarrow \theta_{V_{i}} - \eta_{_V} \cdot \nabla_{\theta_{V_{i}}} {\mathcal{L}}_{V_{i}}$ \end{center} 
   \EndFor
\end{algorithmic}
}
}
\end{algorithm}

\clearpage
\section{Additional Proof of Concept Visualizations}
\label{supp:poc-vis}
In this section, we include additional proof of concept visualizations beyond those presented in Fig.~\ref{fig:octant_advr} from Sec.~\ref{ssec:c-arch}.

The first experiment uses a synthetic dataset comprising 4 sets of Gaussian distributed points in 2-D around the means (-0.5, -0.5), (-0.5, 1.5), (1.5, -1.5) and (1.5, 1.5) as shown in Fig.~\ref{fig:quad}(a). We implement EIGAN with the ally objective to distinguish between reds and blues and adversary objective to segregate x's and o's. This is the simplest case we consider, as there is a single ally and single adversary, each with binary labels. Decision boundaries are linear. We thus use a logistic regression classifier as it has a convex loss function. The encoder is a neural network with a single hidden layer and output dimension $l = 2$. The learnt representation in Fig.~\ref{fig:quad}(b) is intuitive: it maintains linear separability among ally classes, i.e., reds vs blues, but ensures a collapse of adversary classes. 

\begin{figure}[h!]
\begin{center}
\centerline{\includegraphics[width=0.6\columnwidth]{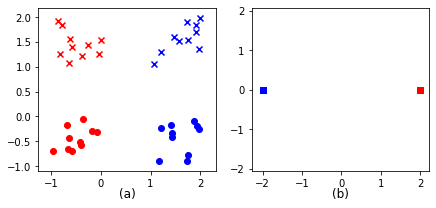}}
\vskip -0.1in
\caption{\small{(a) Quadrant dataset with four groups of points, one ally, and one adversary. The points are linearly separable with regard to the ally's (classifying reds/blues) and an adversary's (classifying x's/o's) objectives. (b) EIGAN learns a representation that collapses the axes along the adversary's objective  while enhancing separation along the ally's. 
}}
\label{fig:quad}
\end{center}
\vskip -0.2in
\end{figure}

Next we consider a dataset with non-linear decision boundary as shown in Fig.~\ref{fig:circle} (a). The ally is interested in a decision boundary between the red and the blue circle, while the adversary is interested in the upper vs. lower semicircle, i.e., x's vs o's. The same encoder is used as in the previous experiment. We use a neural network with a single hidden layer as the ally and adversary because the ally's decision boundary is not linearly separable. Fig.~\ref{fig:circle}(b) shows the learnt representation, which achieves a separability in the encoded space that is qualitatively similar to the representation learnt in Fig.~\ref{fig:quad}(b).

\begin{figure}[h!]
\begin{center}
\vskip -0.1in
\centerline{\includegraphics[width=0.6\columnwidth]{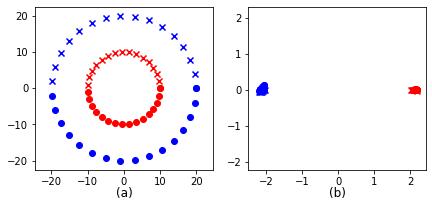}}
\vskip -0.1in
\caption{\small{(a) Circle dataset with the same objectives as Figure \ref{fig:quad} but ally classes (reds vs blues) are not linearly separable. (b) EIGAN learns a similar transformation, making the ally's classification task linearly separable.}}
\label{fig:circle}
\end{center}
\vskip -0.2in
\end{figure}

\begin{figure}[h!]
\begin{center}
\vskip -0.2in
\centerline{\includegraphics[width=0.7\columnwidth]{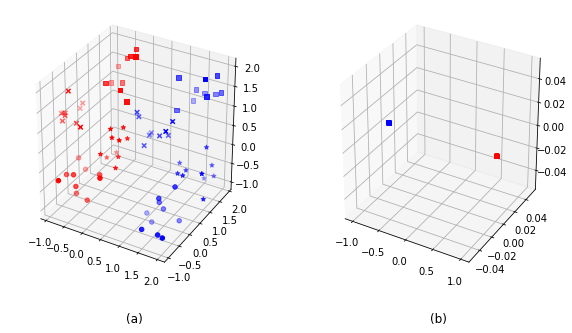}}
\vskip -0.1in
\caption{\small{(a) Octant dataset with eight groups of points, one ally, and two adversaries. The ally is interested in classifying reds/blues while the adversaries are interested in separation along other axes. (b) EIGAN collapses the two adversary dimensions while maintaining separability for the ally.}}
\label{fig:octant}
\end{center}
\vskip -0.2in
\end{figure}

For the final experiment, we extend EIGAN from single ally and single adversary to multiple allies and adversaries. We consider two different cases: EIGAN with (i) 2 ally and 1 adversary objective, and (ii) 1 ally and 2 adversaries presented in Fig.~\ref{fig:octant}. Case (i) is what was presented in Fig.~\ref{fig:octant_advr} of Sec.~\ref{ssec:c-arch}. In the case (ii), we have 8 set of Gaussian distributed points, one in each octant as shown in Fig.~\ref{fig:octant}(a). The ally wants to separate reds vs blues, and the adversaries want to separate along the other axes, i.e., top vs bottom (adversary 1) and squares and stars vs x's and o's (adversary 2). The learnt representation only preserves ally's dimension of variation, i.e. reds vs blues. All the other dimensions are collapsed.

\section{Additional EIGAN Experiments}
\label{supp:eigan-expt}

\begin{figure*}[h!]
\begin{center}
    \centerline{\includegraphics[width=0.9\textwidth]{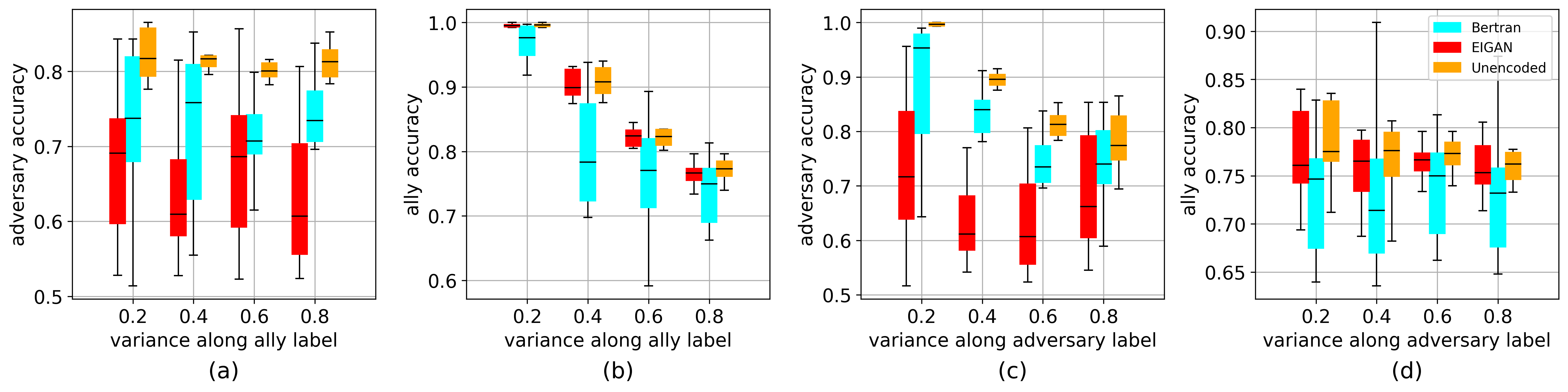}}
\vskip -0.1in
\caption{\small{Effect of change in ally overlap (a-b) and adversary overlap (c-d) on the performance of EIGAN, and its comparison with the unencoded data as well as the method in Bertran et al. \cite{bertran2019adversarially}. EIGAN is able to consistently outperform both baselines on the adversary objective, and obtains performance close to the unencoded data for the ally. 
}}
\label{fig:gaussian_bertran}
\end{center}
\vskip -0.2in
\end{figure*}

\subsection{Comparison with the method in \texorpdfstring{\cite{bertran2019adversarially}}{(Bertran et al. 2019)}}
\label{suppsub:eigan-bertran}

Fig.~\ref{fig:gaussian_bert_ally} from Sec~\ref{ssec:c-eigan-expt} presented a comparison of \cite{bertran2019adversarially} with EIGAN on synthetic Gaussian data. Fig.~\ref{fig:gaussian_bertran} is the extended version, presenting additionally the comparison of (c) adversary and (d) ally performance as we alter the class overlap between adversary labels. Consistent with the conclusions presented in Sec~\ref{ssec:c-eigan-expt} for the ally variation, EIGAN outperforms Bertran et al. consistently  as the adversary exhibits more variance. The $p$-values of the improvements EIGAN makes over the method in \cite{bertran2019adversarially} are below $0.002$ in all 16 cases of comparisons between boxplot distributions.

\label{suppsub:eigan-mimic}
\begin{figure*}[h!]
\begin{center}
\centerline{\includegraphics[width=0.85\textwidth]{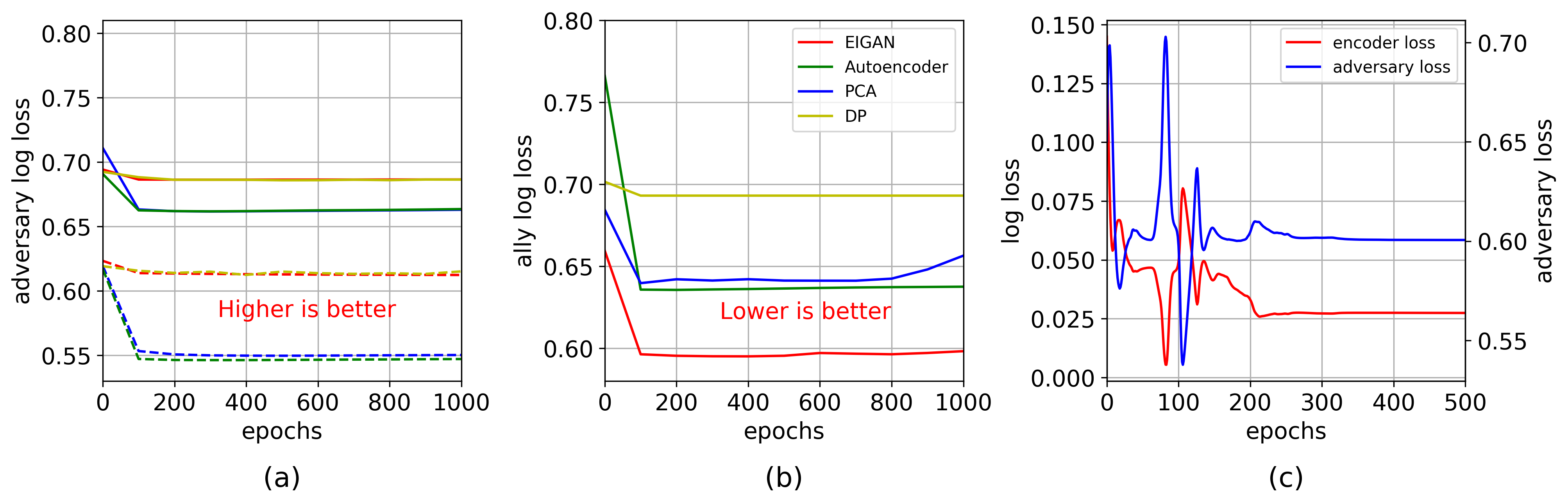}}
\vskip -0.1in
\caption{\small{Predictivity and privacy comparison between EIGAN and the baselines across one ally and two adversaries on the MIMIC-III dataset. (a) On the adversary objectives (gender prediction, solid lines and race prediction, dashed lines) EIGAN matches DP's performance (by design of the experiment, as determined by the selection of the DP $\epsilon$ parameter). Hence, the red and the khaki colored curves overlap. (b) On the ally objective (survival prediction), EIGAN achieves noticeable improvement over the baselines. (c) EIGAN training converges after initial oscillations corresponding to the minimax game.}}
\label{fig:mimic_comparison}
\end{center}
\vskip -0.3in
\end{figure*}

\subsection{Comparison on MIMIC-III}

Here we present the extended results of Fig.~\ref{fig:mimic_half} from Sec.~\ref{ssec:c-eigan-expt}  which compared EIGAN against baselines on the MIMIC dataset. Fig.~\ref{fig:mimic_comparison}(c) shows the loss progression of encoder and adversary as the EIGAN training proceeds. It can be observed that increase/decrease in encoder loss is corresponding to the decrease/increase in adversary loss during the same epoch, consistent with the definition of the encoder loss in \eqref{eqn:log_loss}. The magnitude of the oscillations decreases as we progress through the training and eventually the networks (i.e., the players in the game) reach a steady state.

\begin{figure*}[h!]
\begin{center}
\centerline{\includegraphics[width=0.85\textwidth]{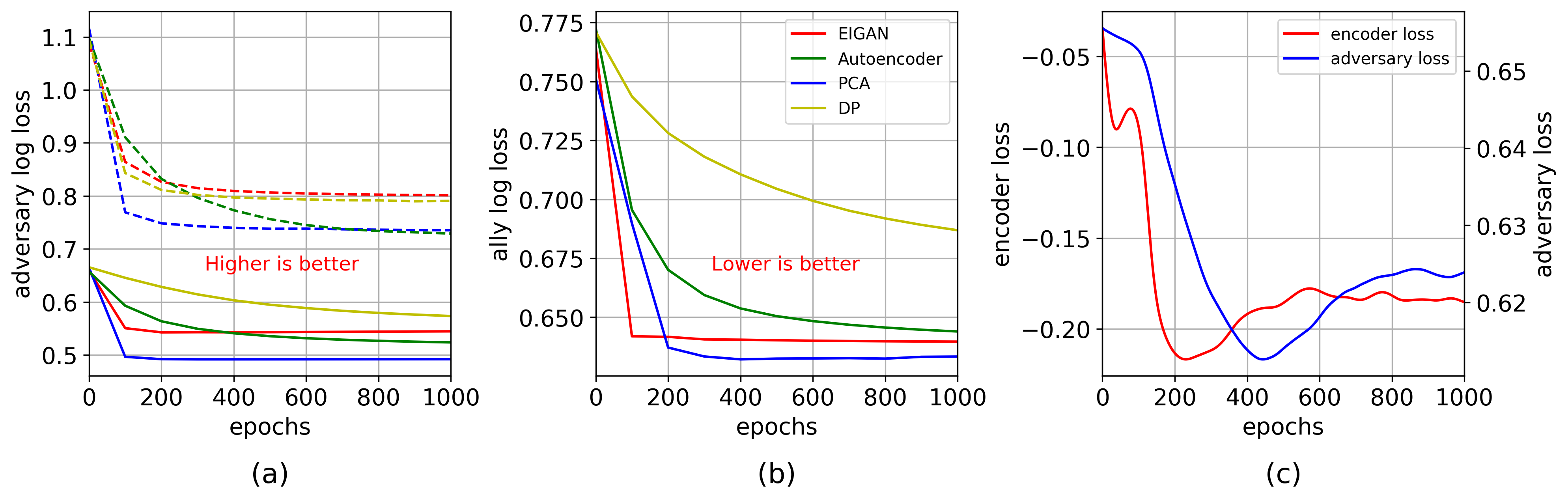}}
\vskip -0.1in
\caption{\small{Predictivity and privacy comparison between EIGAN and the baselines across one ally and two adversaries on the Titanic dataset. (a) On one of the adversary objectives (gender prediction, solid lines) EIGAN matches DP's performance (by design of the experiment, as determined by the selection of the DP $\epsilon$ parameter), but in this case it does not match the other adversary prediction (passenger class prediction, dashed lines), which could be matched for another value of $\epsilon$. (b) On the ally objective (survival prediction), EIGAN achieves marginal improvement over the the baseline Autoencoder. (c) EIGAN training converges after initial oscillations corresponding to the minimax game.}}
\label{fig:titanic_comparison}
\end{center}
\vskip -0.2in
\end{figure*}

\subsection{Comparison on Titanic dataset}
\label{suppsub:eigan-titanic}

For completeness, we also evaluate EIGAN algorithm on another dataset, Titanic, which consists of data listing the details of roughly 800 of the passengers that were onboard the Titanic ship. This experiment aims at understanding the convergence behaviour of EIGAN under limited training data. 

\begin{table}[h!]
\begin{center} \small{
\begin{tabular}{|l|c|c|c|r|}
\hline
\multirow{2}{*}{\textbf{Algorithm}} & \textbf{Ally} & \textbf{Adversary 1} & \textbf{Adversary 2} \\
  & (Survival) & (Gender) & (P-Class)\\
\hline
\hline
Autoencoder    & \textbf{0.6333} & 0.4918 & 0.7351 \\
PCA & 0.6439 & 0.5236 & 0.7289\\
DP    & 0.6869  & \textbf{0.5733} & 0.7904 \\
EIGAN    & \textbf{0.6396} & 0.5444 & \textbf{0.8011}         \\
\hline
\end{tabular} }
\end{center}
\caption{\small{Comparison of log-loss achieved on the test set between the algorithms for the Titanic dataset. EIGAN matches autoencoder on the ally and performs slightly better than DP on adversary 2, while slightly worse on adversary 1.}}
\vskip -0.2in
\label{table:titanic_comparison}
\end{table}

Similar to result on MIMIC-III from Fig.~\ref{fig:mimic_comparison}, Fig.~\ref{fig:titanic_comparison} (a) shows that while EIGAN is able to perform as well or nearly as well as any of the baselines on adversary obfuscation, (b) it obtains the best predictivity on ally objective. (c) shows that the training reaches a steady-state. 

Table~\ref{table:titanic_comparison} summarizes the loss-values of the trained allies/adversaries on encoded data using different techniques. It can be seen that while EIGAN is able to match DP's performance on adversary 2, it performs marginally worse than it on adversary 1, while having a considerable gain on the corresponding ally.

\begin{figure*}[h!]
\begin{center}
\centerline{\includegraphics[width=0.85\textwidth]{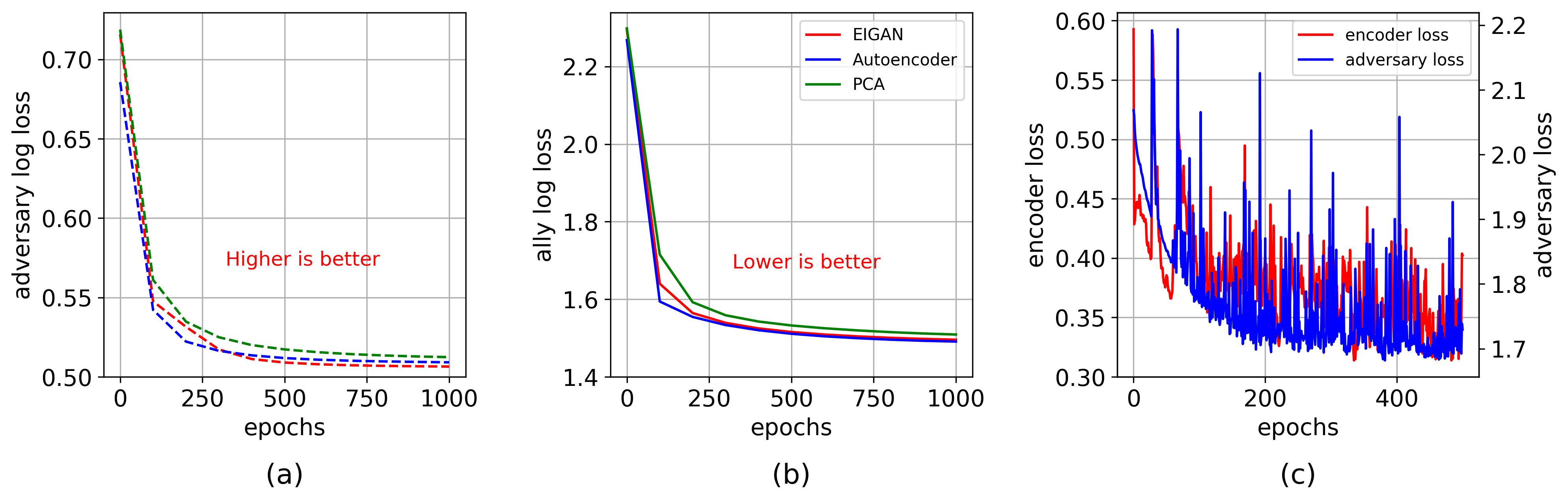}}
    \captionof{figure}{\small{Comparison across one ally and two adversaries on the MNIST dataset. The (a) adversary objective (odd-even prediction, a binary classification with virtually identical trends) converge to roughly the same loss for each algorithm, and (b) ally objective (digit prediction, 10-class classification). With dependencies (in particular, partial overlaps) between the ally and adversary objectives, EIGAN training in (c) is unable to fully converge, consistent with Prop.~\ref{prop:nonuniform}.}}
    \label{fig:mnist_comparison}
\end{center}
\vskip -0.2in
\end{figure*}

\subsection{Comparison on MNIST}
\label{suppsub:eigan-mnist}
We conduct an additional experiment on the MNIST dataset of handwritten digits to validate the findings in Prop. \ref{prop:uniform}\&\ref{prop:nonuniform} when dependencies exist between the ally and adversary objectives. In this case, we use digit recognition (0-9) as the ally objective and even vs odd as adversary objective, which exhibits a clear dependence because if someone could recover the digit (ally objective), then inferring odd-vs-even (adversary objective) becomes trivial. Formally, referring to the propositions, we have $\mathcal{Y}_{\text{odd}} = \mathcal{Y}_1 + \mathcal{Y}_3 + \cdots + \mathcal{Y}_9$ and $\mathcal{Y}_{\text{even}} = 1 - \mathcal{Y}_{\text{odd}}$ where $\mathcal{Y}_{(\cdot)}$ is the true probability distribution on the labels and thus can be added. Similarly, $\hat{Y}_{\text{odd}} = \hat{Y}_1 + \hat{Y}_3 + \cdots + \hat{Y}_9$ and $\hat{Y}_{\text{even}} = 1 - \hat{Y}_{\text{odd}}$, where $\hat{Y}_{(\cdot)}$ are probabilities of correct predictions. Prop.~\ref{prop:nonuniform} follows when we substitute these in \eqref{eq:prop3_ent}, i.e. the adversary is not forced to a follow uniform distribution if sufficient weight is given to the ally.

Fig.~\ref{fig:mnist_comparison} shows the result of this experiment, where the weights of the allies and adversaries are set equal. (a) shows that the adversary is not able to achieve any separation from the Autoencoder or PCA. Observing (c), we realize that the training process does not reach a steady state-convergence point, consistent with the propositions.

\begin{figure}[h!]
\begin{center}
\centerline{\includegraphics[width=0.6\columnwidth]{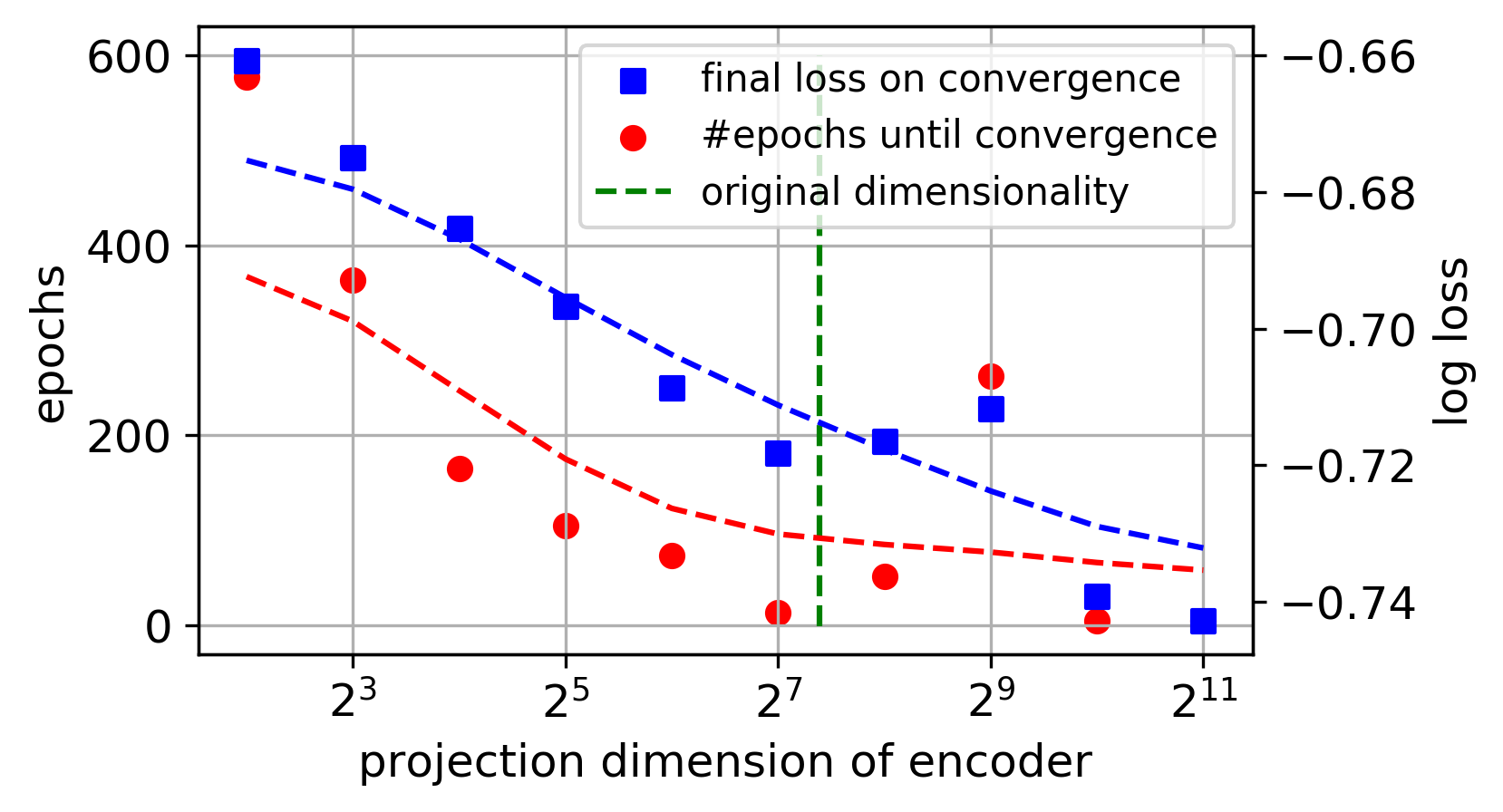}}
\vskip -0.1in
\caption{\small{Effect of EIGAN's encoding dimension space on the number of training epochs required to reach within 1\% of training loss convergence (left axis) and the achieved final testing loss (right axis) for MIMIC-III. The achieved loss decreases sharply as the dimension increases, emphasizing a tradeoff between model quality and the memory needed for the encoded data. In fact, beyond the right end of the X-axis value, the model runs out of memory on our high performance machine. (Dashed curves are fit using weighted moving averages.)}}
\label{fig:dim_comparison}
\end{center}
\vskip -0.2in
\end{figure}

\begin{figure}[h!]
\begin{center}
\centerline{\includegraphics[width=0.6\columnwidth]{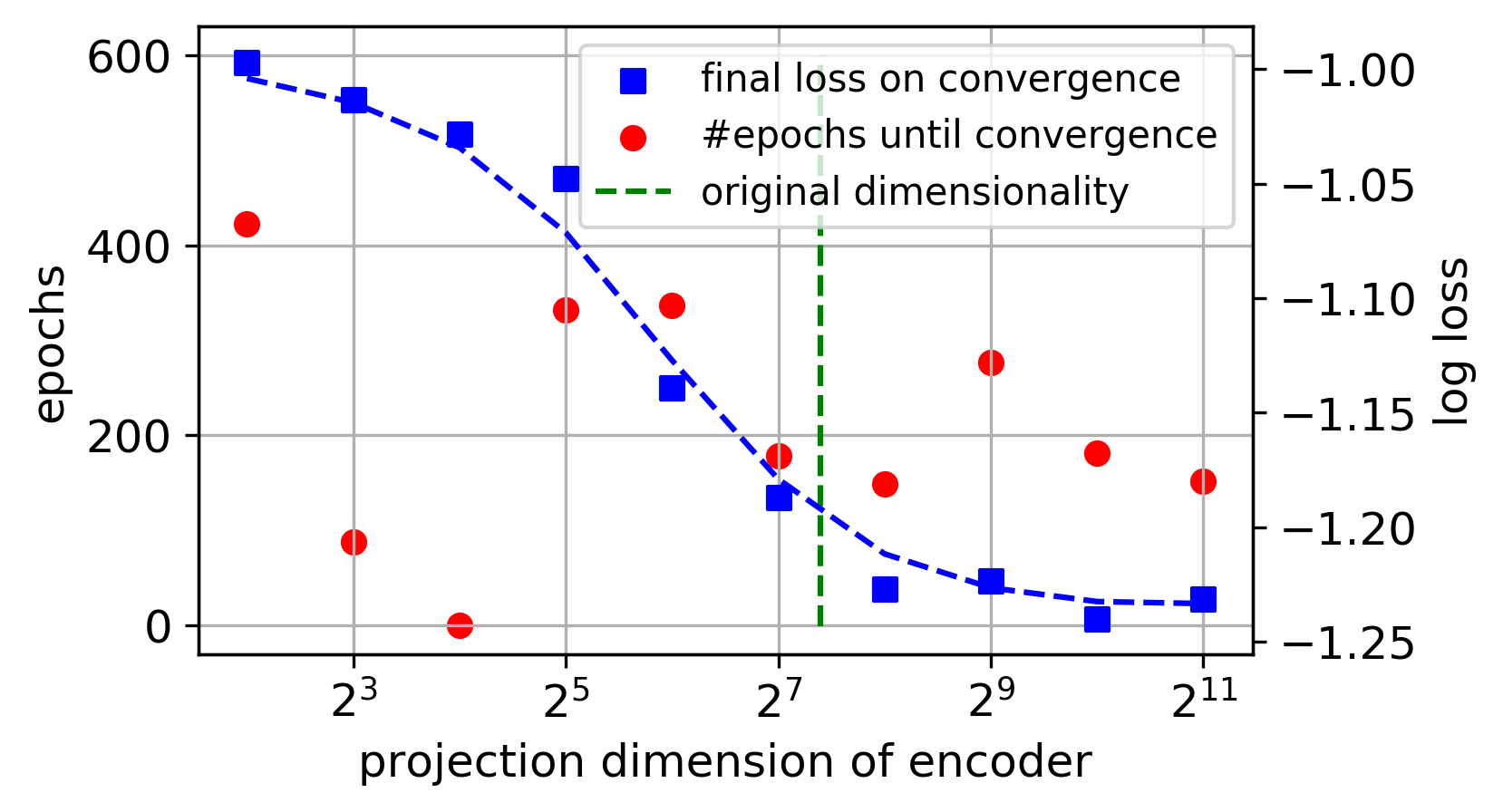}}
\vskip -0.1in
\caption{\small{Effect of EIGAN's encoding dimension space on the number of training epochs required to reach within 1\% of training loss convergence (left axis) and the achieved final testing loss (right axis) for the Titanic dataset. The achieved loss decreases sharply as the dimension increases, emphasizing a tradeoff between model quality and required memory. (The dashed curve is fit using a weighted moving average.)}}
\label{fig:dim_comparison_titanic}
\end{center}
\vskip -0.2in
\end{figure}

\subsection{Varying Encoder Dimensionality}
\label{suppsub:eigan-encoder-dim}

Under \textit{varying system dimensions}, in Sec.~\ref{ssec:c-eigan-expt}, we also discussed the effect of varying the encoder dimensionality.  Fig.~\ref{fig:dim_comparison} depicts the results for the MIMIC-III dataset while Fig.~\ref{fig:dim_comparison_titanic} depicts the result of a similar experiment on Titanic dataset. In the two experiments, as the encoder output dimension $l$ is increased, we observe that the training mostly requires fewer epochs to converge and is able to achieve a lower encoder testing loss. This could be explained by the fact that larger networks (i.e. more number of trainable parameters) have more degrees of freedom in training. Interestingly, while there is some variation, the test loss continues to decrease beyond $d$, the original dimension of the data samples, i.e., when $l \geq d$. The relevant consideration with EIGAN, then, appears to be the tradeoff between encoding quality, as measured by the encoding space dimension, and the memory required for training the encoder, which increases with the dimension of the encoder.

\section{Additional D-EIGAN Experiments}
\label{supp:d-eigan-expt}

\begin{figure}[h!]
\begin{center}
\centerline{\includegraphics[width=0.8\columnwidth]{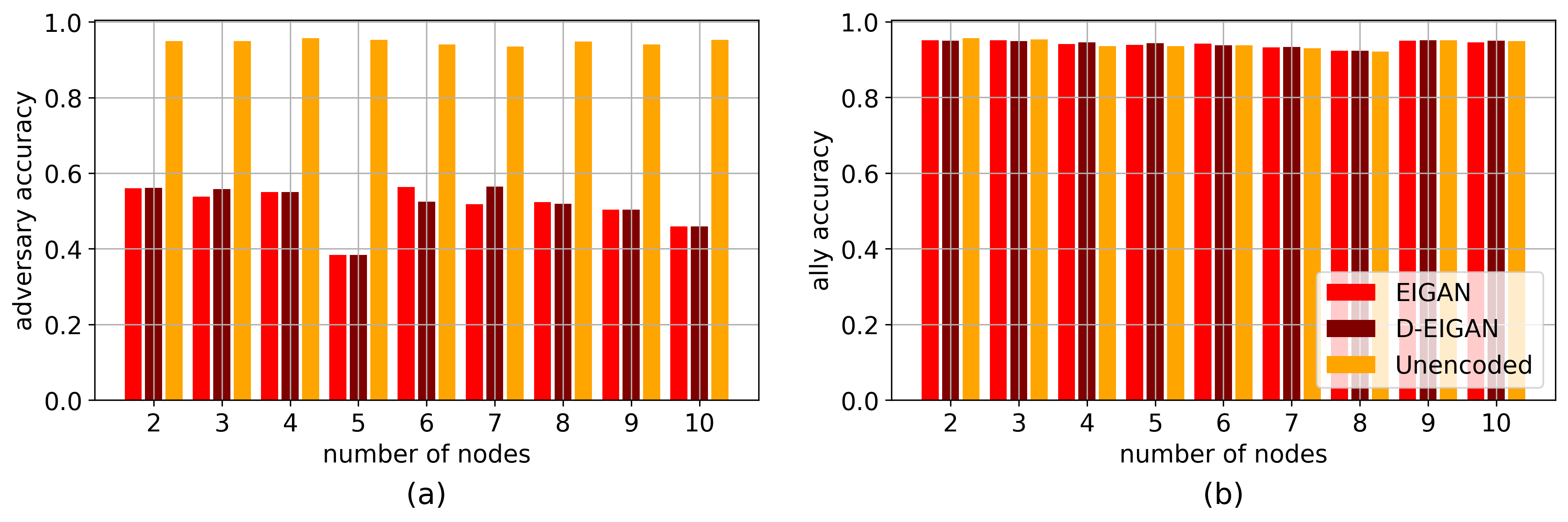}}
\vspace{-0.1in}
\caption{\small{Comparison of (a) adversary and (b) ally performance using synthetic Gaussian data while increasing the number of nodes and sharing all the model weights ($\phi=1$) after every minibatch ($\delta = 1$) during federated training. The distribution of data is i.i.d. across the nodes, which is obtained by generating Gaussian data with constant mean and variance across nodes. It can be observed that EIGAN and D-EIGAN converge to similar performances regardless of the number of nodes.}}
\label{fig:distributed_num_nodes_2}
\end{center}
\vskip -0.2in
\end{figure}

\subsection{Varying Number of Nodes}
\label{suppsub:d-eigan-numnodes}

Fig.~\ref{fig:distributed_num_nodes} from Sec.~\ref{ssec:d-eigan-expt} presented the effect of varying the number of nodes on D-EIGAN performance when the nodes have non-i.i.d data distributions. Fig.~\ref{fig:distributed_num_nodes_2} shows the result of the experiment when the nodes instead have i.i.d data. We observe that the performance of the ally and adversary remains reasonably constant (and similar to EIGAN) as we increase the number of nodes under D-EIGAN. From the two experiments, we can conclude that D-EIGAN can readily extend to scenarios where data is distributed over larger number of nodes without sacrificing the performance on ally and adversary objectives.

\begin{figure}[h!]
\begin{center}
\centerline{\includegraphics[width=0.6\columnwidth]{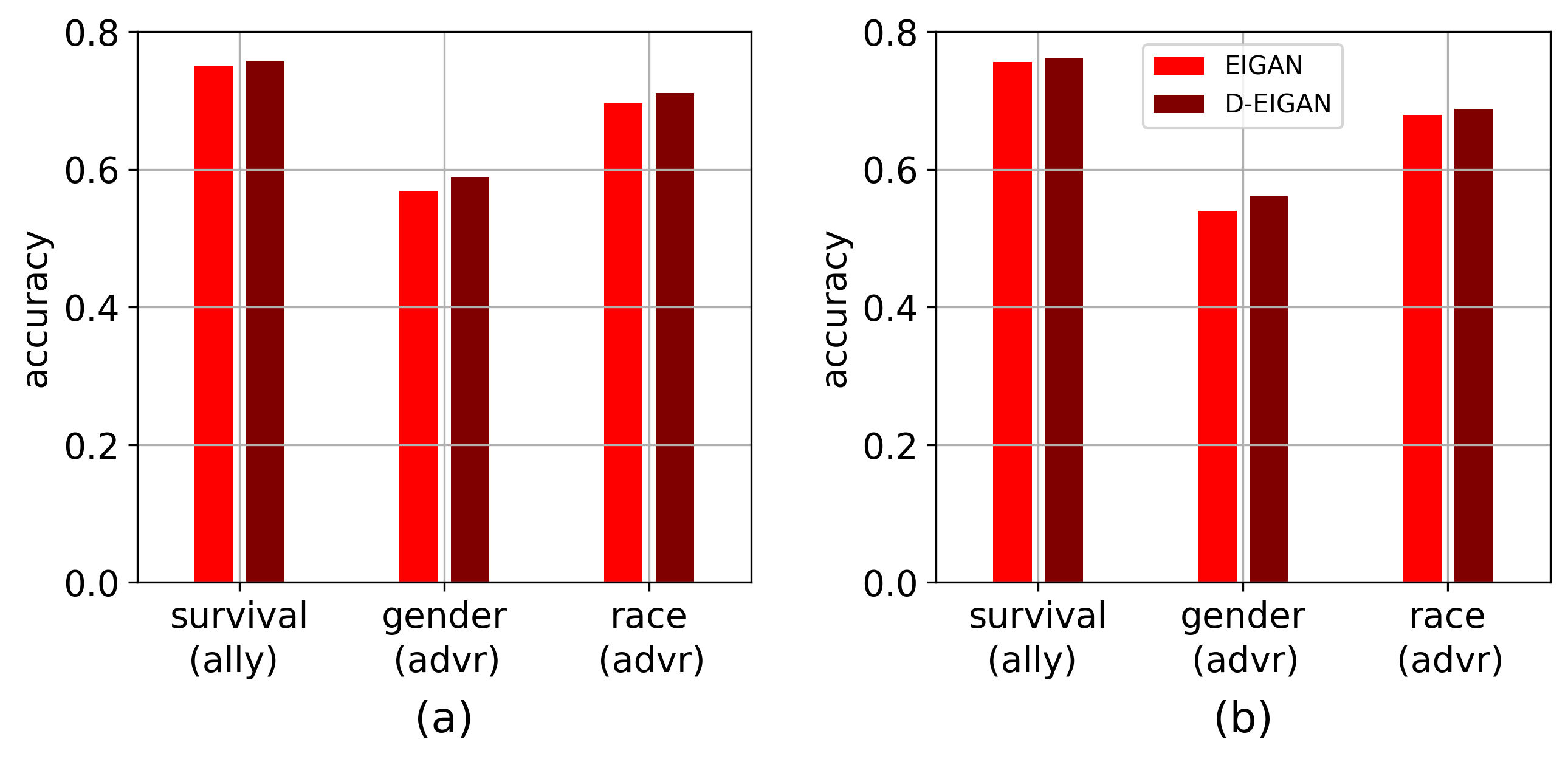}}
    \vskip -0.15in
    \caption{\small{Comparison of distributed ($K = 2$ nodes) EIGAN with centralized EIGAN. Survival is the ally objective, and gender and race are the chosen adversary objectives for the experiment. (a) Training of distributed EIGAN involves same adversary objectives, i.e., obfuscating gender and race across the both the nodes. (b) Each node has a different adversary objective, while they share the same ally objective.}}
    \label{fig:distributed_eigan_mimic_2}
\end{center}
\vskip -0.2in
\end{figure}

\subsection{Varying Objectives across Nodes}
\label{suppsub:d-eigan-var-obj}

Fig.~\ref{fig:distributed_eigan_mimic_2} is an addition to results presented in Fig.~\ref{fig:distributed_eigan_mimic} from Sec.~\ref{ssec:d-eigan-expt}. In this experiment we consider the same dataset with the same set of ally and adversary objectives, but in this case over i.i.d datasets on 2 nodes instead of non-i.i.d datasets on 10 nodes. Survival is the ally objective while, gender and race are adversary objectives. We consider two scenarios: in (a) gender and race are common adversaries across the two nodes, while in (b) gender is the adversary on one node and race is the adversary on the other node (i.e. different objectives on different nodes). We observe the performance of D-EIGAN is comparable to that of EIGAN in the case where the data is centralized. This observed behavior, i.e., that a privacy and/or predictivity objective at one node is adopted across all the encoders, is consistent with Prop.~\ref{prop:dist_obj}.

\begin{figure}[h!]
\begin{center}
    \centerline{\includegraphics[width=0.6\columnwidth]{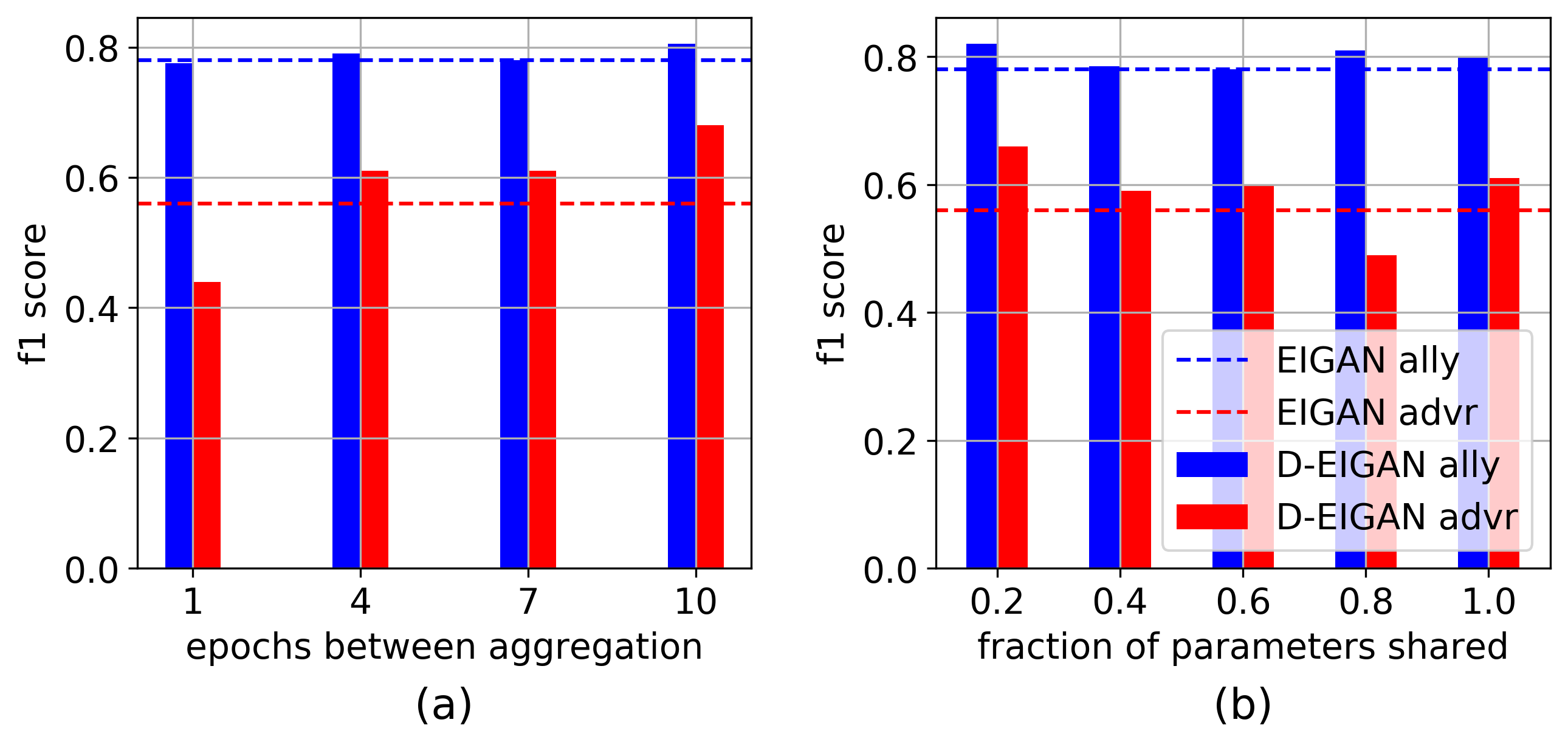}}
    \vskip -0.15in
    \caption{\small{Effect of varying (a) frequency of sync ($\delta$, measured in terms of number of epochs between parameter sharing) and (b) fraction of parameters uploaded/downloaded ($\phi$) on a distributed implementation consisting of $K = 2$ nodes. The results shows that as the frequency of sync/fraction of parameters shared increases, the performance of the system on hiding the sensitive variable is increased considerably, while there is little effect on the ally convergence.}}
    \label{fig:distributed_control_params_2}
\end{center}
\vskip -0.3in
\end{figure}

\subsection{Varying Synchronization parameters}
\label{suppsub:d-eigan-sync-params}

This section extends the results discussed in Sec.~\ref{ssec:d-eigan-expt} under \textit{varying synchronization parameters}. To understand the effect of fractional parameter sharing, we evaluate it on i.i.d. datasets over 2 nodes using the synthetic Gaussian dataset in Sec.~\ref{ssec:d-eigan-expt}. The data has unbalanced classes, so we compare f1-score instead of accuracy. The result is shown in Fig.~\ref{fig:distributed_control_params_2}: We see that unlike the trend on the non-i.i.d. case, there is no visible benefit of sharing only a fraction of parameters, as seen in Fig.~\ref{fig:distributed_control_params_2}(b). Similarly, in (a) it can be observed that performance over the adversary degrades as the frequency of sync is decreased, i.e., number of epochs between aggregation in increased. Hence, the properties observed upon having non-i.i.d data distributions in Sec.~\ref{ssec:d-eigan-expt} do not hold when handling i.i.d data. This is because the reduction in model bias is not desirable in the case of i.i.d.

\end{document}